\documentclass[11pt]{article} 
\usepackage[margin=1in]{geometry}
\usepackage[utf8]{inputenc}
\usepackage{amsmath, amssymb, amsfonts} 
\usepackage{graphicx}
\usepackage{booktabs}
\usepackage{amsthm}
\usepackage{makecell}
\usepackage{siunitx} 
\usepackage{hyperref}
\usepackage{natbib}

\theoremstyle{plain}
\newtheorem{theorem}{Theorem}
\newtheorem{proposition}{Proposition}
\newtheorem{corollary}{Corollary}
\theoremstyle{remark}
\newtheorem{remark}{Remark}

\title{Uncertainty-Aware Reward Discounting for Mitigating Reward Hacking}
\author{Disha Singha} 
\date{}

\title{Uncertainty-Aware Reward Discounting for Mitigating Reward Hacking}

\begin{document}

\maketitle
\begin{abstract}
Reinforcement learning from human feedback (RLHF) systems face a compounding alignment challenge: not only are learned reward models uncertain about unseen state-action pairs, but the human preference annotations they are trained on are themselves inconsistent, context-dependent, and noisy. Existing approaches address these uncertainty sources in isolation -- epistemic uncertainty is used to guide exploration, while preference uncertainty is absorbed during reward model training but discarded during policy optimization. We introduce Uncertainty-Aware Reward Discounting (UARD), a principled framework that jointly models epistemic uncertainty in value estimation via ensemble disagreement and aleatoric uncertainty in human preference annotations via annotator variability, combining these signals through a confidence-adjusted Reliability Filter that adaptively modulates reward weighting during policy optimization. We prove that this dynamic discounting preserves the contraction property of the Bellman operator, guaranteeing convergence to a unique fixed point, and provide an information-theoretic justification grounded in the Information Bottleneck principle. Empirically, UARD reduces reward hacking incidents by up to 93.6\% across discrete decision-making and continuous control benchmarks (MuJoCo) compared to nine baselines including DQN, Ensemble-DQN, CQL, CPO, TRPO, SAC, EDAC, SUNRISE, and PPO, while maintaining competitive task performance on well-specified rewards. Under annotation noise ranging from 10\% to 30\% Gaussian perturbation, UARD retains near-zero safety violations compared to baselines' near-linear degradation. These results demonstrate that treating uncertainty as an active component of the optimization objective -- rather than a passive diagnostic signal -- provides a principled pathway toward more reliable and aligned RL systems.
\end{abstract}

\section{Introduction}

Reinforcement learning from human feedback (RLHF) has become the dominant paradigm for aligning AI systems with human objectives, underpinning recent advances in large language models, robotic control, and autonomous decision-making \citep{christiano2017deep, ouyang2022training}. Yet RLHF systems face a structural vulnerability that has received insufficient attention: the reward signal used to train the policy is doubly uncertain. The reward model is uncertain about state-action pairs it has not seen \citep{osband2016deep}, and the human annotations it was trained on are themselves noisy, inconsistent, and context-dependent \citep{biyik2019batch, christiano2017deep}. When a policy optimizer encounters this doubly uncertain reward signal, it does exactly what it is designed to do—maximize it. The result is reward hacking: the agent finds trajectories that score highly under the proxy reward while diverging from the true human objective \citep{krakovna2020specification, skalse2022defining}. For instance, a cleaning robot rewarded for visible cleanliness might hide trash rather than dispose of it \citep{krakovna2020specification}; a content recommendation system might promote addictive content to maximize engagement metrics.

We argue that this failure mode is not merely an engineering flaw but a structural consequence of treating an uncertain reward signal as ground truth. Prior work addresses the two uncertainty sources independently: epistemic uncertainty is used to guide exploration \citep{osband2016deep, pathak2017curiosity} or regularize Q-values \citep{kumar2020conservative}, while preference uncertainty is modeled during reward learning \citep{biyik2019batch, lee2021pebble} but discarded during policy optimization. Neither approach prevents the agent from over-optimizing in precisely the regions where both sources of uncertainty are highest—regions where the reward signal is least trustworthy \citep{christiano2017deep, krakovna2020specification}. We propose a fundamentally different approach: rather than using uncertainty as a signal for exploration or constraint satisfaction, we treat it as evidence that the reward signal itself may be untrustworthy, and discount it accordingly.

UARD addresses reward hacking by wrapping the policy optimization process in a Reliability Filter that dynamically down-weights reward signals in proportion to their dual-source uncertainty. When the agent's ensemble disagrees about a state-action pair \citep{osband2016deep}, or when human annotators show high variance in their reward estimates for that region \citep{biyik2019batch}, the effective reward contribution is reduced—preventing the agent from over-optimizing on signals it should not trust. This mechanism requires no access to ground truth rewards, no modification of the environment, and is compatible with standard Q-learning and actor-critic frameworks. Our primary result is a $93.6\%$ reduction in reward hacking incidents on GridWorld benchmarks and consistent alignment maintenance on MuJoCo continuous control tasks \citep{todorov2012mujoco}, while retaining near-zero safety violations under up to $30\%$ annotation noise.

\subsection{Contributions}
UARD addresses reward hacking by wrapping the policy optimization process in a Reliability Filter that dynamically down-weights reward signals in proportion to their dual-source uncertainty. When the agent's ensemble disagrees about a state-action pair \citep{osband2016deep}, or when human annotators show high variance in their reward estimates for that region \citep{biyik2019batch}, the effective reward contribution is reduced---preventing the agent from over-optimizing on signals it should not trust. This mechanism requires no access to ground truth rewards, no modification of the environment, and is compatible with standard Q-learning and actor-critic frameworks. Our primary result is a $93.6\%$ reduction in reward hacking incidents on GridWorld benchmarks and consistent alignment maintenance on MuJoCo continuous control tasks \citep{todorov2012mujoco}, while retaining near-zero safety violations under up to $30\%$ annotation noise.

\begin{itemize}
    \item \textbf{Dual-Source Uncertainty Framework:} We introduce a principled approach that jointly models epistemic uncertainty (via ensemble disagreement over Q-values) and aleatoric uncertainty (via variance in human reward annotations), providing a richer signal for identifying unreliable reward regions than either source alone.
    
    \item \textbf{Theoretical Guarantees:} We prove that the Reliability Filter preserves the $\gamma$-contraction property of the Bellman operator (Theorem 1), ensuring convergence to a unique fixed point. We further provide an information-theoretic justification grounded in the Information Bottleneck principle \citep{tishby2000information}.
    
    \item \textbf{Comprehensive Baseline Comparison:} We evaluate UARD against nine baselines spanning standard RL methods (DQN, SAC, PPO), conservative methods (CQL, CPO, TRPO), and uncertainty-aware approaches (Ensemble-DQN, EDAC, SUNRISE), demonstrating that dual-source uncertainty with active discounting outperforms single-source methods.
    
    \item \textbf{Empirical Validation:} UARD reduces reward hacking by up to $93.6\%$ across GridWorld configurations and MuJoCo continuous control tasks (Hopper-v4, Walker2d-v4) while maintaining competitive performance on well-specified rewards.
    
    \item \textbf{Robustness Analysis:} Under annotation noise from $10\%$ to $30\%$, UARD retains near-zero safety violations compared to baselines' near-linear degradation, demonstrating superior resilience to inconsistent human feedback.
\end{itemize}

\section{Related Work}
Our work intersects several research areas in reinforcement learning, machine 
learning, and AI safety. We position UARD relative to existing approaches and 
highlight key differences.

\subsection{Safe Reinforcement Learning}

Safe RL aims to learn policies that satisfy safety constraints during training 
and deployment~\citep{garcia2015comprehensive}. Constrained RL approaches formulate 
safety as hard constraints on cumulative costs, using methods such as Constrained 
Policy Optimization (CPO)~\citep{achiam2017constrained}, reward 
shaping~\citep{tessler2018reward}, or Lagrangian relaxation~\citep{chow2017risk}. 
Other approaches employ shielding~\cite{alshiekh2018safe}, where a safety monitor 
intervenes to prevent unsafe actions, or barrier certificates~\citep{cheng2019end} 
to provide formal guarantees.

While these methods prevent \textit{constraint violations}, they do not address 
\textit{reward misspecification}. If the reward function itself fails to capture 
true objectives, an agent can satisfy all safety constraints while still engaging 
in reward hacking. UARD addresses a complementary failure mode: we assume safety 
constraints (if present) are well-specified, but the reward signal is uncertain 
and potentially misleading. Our approach could be combined with constrained RL 
to provide defense-in-depth against both types of failures.

\subsection{Uncertainty Quantification in Reinforcement Learning}

Epistemic uncertainty estimation has been widely studied for exploration and model 
learning. Common approaches include ensemble methods~\cite{osband2016deep, 
chua2018deep}, which maintain multiple models and measure disagreement; dropout-based 
approximations~\cite{gal2016dropout}, which treat dropout as approximate Bayesian 
inference; and Bayesian neural networks~\cite{depeweg2016learning}, which maintain 
posterior distributions over network weights. These methods primarily guide 
exploration toward high-uncertainty regions~\cite{pathak2017curiosity, burda2018exploration} 
or improve model learning in model-based RL~\cite{janner2019trust}.

A notable exception is Conservative Q-Learning (CQL)~\cite{kumar2020conservative}, 
which penalizes Q-values for out-of-distribution actions to prevent overestimation 
in offline RL. However, CQL applies \textit{uniform pessimism} across the state-action 
space, whereas UARD applies \textit{adaptive discounting} based on heterogeneous 
uncertainty. Furthermore, CQL focuses solely on epistemic uncertainty and does 
not model preference ambiguity.

Uncertainty-aware RL has also been explored for risk-sensitive 
objectives~\cite{clements2020estimating} and distributional 
RL~\cite{dabney2018distributional}, where the focus is on modeling return 
distributions rather than mitigating reward hacking. Our work differs in using 
uncertainty to \textit{discount rewards} rather than to guide exploration or 
model risk.

\subsection{Learning from Human Preferences}

Preference-based RL learns reward functions from human comparisons rather than 
scalar feedback~\cite{christiano2017deep, ibarz2018reward, wirth2017survey}. 
These methods train a reward model $\hat{R}_\phi$ to predict human preferences 
over trajectory segments, then optimize policies against the learned reward. Recent 
work has extended this paradigm to model annotator disagreement~\cite{biyik2019batch}, 
learn ensembles of reward models to capture 
uncertainty~\cite{lee2021pebble, reddy2020learning}, and incorporate active 
learning to reduce annotation burden~\cite{sadigh2017active}.

While these approaches capture preference uncertainty during reward learning, they 
typically do not \textit{integrate} this uncertainty into policy optimization. 
Once the reward model is trained, the agent treats predicted rewards as ground 
truth. In contrast, UARD explicitly models annotation variance $\sigma_h^2(s,a)$ 
and uses it to down-weight rewards during Q-learning updates, preventing 
over-optimization in ambiguous regions. Our framework is complementary to 
preference learning: we could replace hand-crafted annotations with learned 
reward models and still apply uncertainty-aware discounting.

\subsection{Reward Hacking and Specification Gaming}

Reward hacking—also termed specification gaming~\cite{krakovna2020specification} 
or goal misgeneralization~\cite{langosco2021goal}—has been extensively documented 
across domains. Proposed mitigations include adversarial training~\cite{gleave2020adversarial}, 
where an adversary tries to find reward exploits; debate and 
amplification~\cite{irving2018ai}, where agents argue over the correctness of 
behaviors; and interpretability tools~\cite{olah2018building} to detect 
unintended strategies. Theoretical work has formalized the problem through the 
lens of Goodhart's Law~\cite{manheim2019categorizing} and reward 
misspecification~\cite{skalse2022defining}.

Recent work on AI alignment emphasizes the need for robustness to proxy 
rewards~\cite{leike2018scalable, hadfield2016cooperative}. Inverse reinforcement 
learning (IRL)~\cite{ng2000algorithms, ziebart2008maximum} attempts to infer 
true rewards from demonstrations, but suffers from ill-posed inverse problems 
and ambiguity in reward inference~\cite{cao2021survey}. Assistance 
games~\cite{hadfield2016cooperative} model the human-agent interaction as a 
cooperative game where the agent learns human preferences through interaction, 
but require strong assumptions about human rationality.

UARD provides a different perspective: rather than attempting to perfectly infer 
$R^*$ or design mechanisms to prevent gaming, we accept that $\hat{R}$ is imperfect 
and use uncertainty as a proxy for potential misspecification. By down-weighting 
rewards where both models and humans are uncertain, we reduce over-optimization 
without requiring access to ground truth.

\subsection{Trust Region and Conservative Policy Methods}

Trust Region Policy Optimization (TRPO)~\cite{schulman2015trust} constrains policy updates to ensure monotonic improvement via KL-divergence constraints, preventing catastrophic performance collapse. While TRPO provides stability guarantees in policy space, it does not explicitly account for reward uncertainty or misspecification—policies remain vulnerable to reward hacking if the objective itself is flawed.

Conservative Q-Learning (CQL)~\cite{kumar2020conservative} addresses overestimation in offline RL by penalizing Q-values for out-of-distribution actions, effectively applying uniform pessimism across the state-action space. UARD differs by implementing \textit{adaptive} discounting based on heterogeneous uncertainty estimates rather than uniform penalties. Furthermore, CQL focuses solely on epistemic uncertainty without modeling preference ambiguity.

Constrained Policy Optimization (CPO)~\cite{achiam2017constrained} enforces hard constraints on cost functions during training, ensuring safety within predefined bounds. However, CPO assumes constraints are correctly specified and does not address reward misalignment when the reward function itself is uncertain or inconsistent.

UARD differs from these approaches by integrating uncertainty directly into the reward signal rather than constraining policy updates or applying uniform pessimism, enabling principled handling of reward misspecification across heterogeneous uncertainty landscapes.

\subsection{Adaptive Discount Factors}

Some prior work adapts discount factors based on environment properties. For 
example,~\cite{jung2022adaptive} adjust $\gamma$ based on task horizon, 
and~\cite{lee2022adaptive} modulate discounting using advantage estimates. 
However, these methods adjust \textit{temporal} discounting (how much future 
rewards are valued) based on state-dependent properties, whereas UARD adjusts 
\textit{reward weighting} based on uncertainty. The mechanisms and objectives 
are fundamentally different: temporal discounting controls planning horizons, 
while our reliability filter controls trust in the reward signal.

\subsection{Conservative and Robust RL}

Robust RL aims to learn policies that perform well under model 
uncertainty~\cite{pinto2017robust, rajeswaran2017epopt} or adversarial 
perturbations~\cite{tessler2019action, zhang2020robust}. These methods typically 
optimize worst-case performance over a set of plausible models or 
perturbations~\cite{derman2018soft}. While conceptually related, robust RL 
focuses on \textit{transition dynamics} uncertainty, whereas UARD addresses 
\textit{reward} uncertainty. Furthermore, robust RL often leads to overly 
conservative policies, whereas UARD provides a tunable trade-off via $\alpha_m$ 
and $\alpha_h$.

Trust region methods such as TRPO~\cite{schulman2015trust} and 
PPO~\cite{schulman2017proximal} constrain policy updates to prevent performance 
collapse, but these constraints operate in policy space and do not account for 
reward uncertainty. Our approach is complementary: UARD could be combined with 
trust region constraints to provide both policy stability and reward robustness.

\subsection{Positioning of UARD}

Table~\ref{tab:related_work} summarizes how UARD differs from related approaches 
across key dimensions.

\begin{table}[h]
\centering
\small
\caption{Comparison of UARD with related approaches}
\label{tab:related_work}
\begin{tabular}{lccccc}
\toprule
\textbf{Method} & \makecell{\textbf{Epistemic}\\\textbf{Uncertainty}} & \makecell{\textbf{Preference}\\\textbf{Uncertainty}} & \makecell{\textbf{Adaptive}\\\textbf{Discounting}} & \makecell{\textbf{Convergence}\\\textbf{Guarantees}} & \makecell{\textbf{Addresses}\\\textbf{Hacking}} \\
\midrule
Ensemble-DQN & \checkmark & $\times$ & $\times$ & \checkmark & $\times$ \\
CQL & \checkmark & $\times$ & Uniform & \checkmark & Partial \\
CPO & $\times$ & $\times$ & $\times$ & \checkmark & $\times$ \\
PEBBLE & $\times$ & \checkmark & $\times$ & \checkmark & $\times$ \\
Robust RL & \checkmark & $\times$ & $\times$ & \checkmark & $\times$ \\
\midrule
\textbf{UARD} & \checkmark & \checkmark & \checkmark & \checkmark & \checkmark \\
\bottomrule
\end{tabular}
\end{table}

To our knowledge, UARD is the first approach to \textit{jointly} model epistemic 
and preference uncertainty and use their combination to \textit{adaptively discount} 
rewards during policy optimization, with formal convergence guarantees and 
demonstrated effectiveness against reward hacking.

\section{Problem Formulation \& Preliminaries}

\subsection{Markov Decision Process}
\label{sec:mdp}

We model the environment as a Markov Decision Process (MDP) defined by the tuple
$\mathcal{M} = (\mathcal{S}, \mathcal{A}, \mathcal{P}, R^*, \gamma)$, where:

\begin{itemize}
\item $\mathcal{S}$ is the (possibly continuous) state space,
\item $\mathcal{A}$ is the action space,
\item $\mathcal{P}: \mathcal{S} \times \mathcal{A} \rightarrow \Delta(\mathcal{S})$
is the transition dynamics, with $\Delta(\mathcal{S})$ denoting the probability simplex,
\item $R^*: \mathcal{S} \times \mathcal{A} \rightarrow \mathbb{R}$ is the \textbf{true reward function}, representing the underlying objective,
\item $\gamma \in [0,1)$ is the discount factor.
\end{itemize}

A policy $\pi: \mathcal{S} \rightarrow \Delta(\mathcal{A})$ maps states to distributions over actions. The value function under $\pi$ is:

\begin{equation}
V^\pi(s) = \mathbb{E}_{\pi, \mathcal{P}} \left[ \sum_{t=0}^{\infty} \gamma^t R^*(s_t, a_t) \mid s_0 = s \right]
\end{equation}

and the action-value function is:

\begin{equation}
Q^\pi(s,a) = \mathbb{E}_{\pi, \mathcal{P}} \left[ \sum_{t=0}^{\infty} \gamma^t R^*(s_t, a_t) \mid s_0 = s, a_0 = a \right]
\end{equation}

The optimal policy $\pi^*$ satisfies:

\begin{equation}
\pi^* = \arg\max_{\pi} V^\pi(s), \quad \forall s \in \mathcal{S}.
\end{equation}

\subsection{Reward Misspecification and the Hacking Problem}
\label{sec:reward_hacking}

In practical settings, the true reward function $R^*$ is not directly accessible. Instead, learning is driven by a \textbf{proxy reward function} $\hat{R}: \mathcal{S} \times \mathcal{A} \rightarrow \mathbb{R}$, which approximates $R^*$ but may be imperfect.

Such proxy rewards arise from:
\begin{enumerate}
\item \textbf{Preference learning}, where $\hat{R}$ is inferred from human feedback that may be noisy, inconsistent, or context-dependent,
\item \textbf{Hand-crafted objectives}, where $\hat{R}$ is manually designed to approximate complex goals.
\end{enumerate}

\paragraph{Definition (Reward Hacking).}
We say reward hacking occurs when there exists a state-action pair $(s,a)$ such that:

\begin{equation}
\hat{R}(s,a) \gg R^*(s,a),
\end{equation}

i.e., the proxy reward significantly overestimates the true objective. In such cases, an agent optimizing $\hat{R}$ may converge to:

\begin{equation}
\hat{\pi}^* = \arg\max_{\pi} V^{\pi}_{\hat{R}}(s),
\end{equation}

which performs well under $\hat{R}$ but poorly under $R^*$.

\paragraph{Quantifying Misspecification.}
We define the reward gap:

\begin{equation}
\Delta(s,a) = \hat{R}(s,a) - R^*(s,a),
\end{equation}

and measure policy misalignment as:

\begin{equation}
\mathcal{L}(\pi) = \mathbb{E}_{(s,a) \sim \rho^\pi} \left[ |\Delta(s,a)| \right],
\end{equation}

where $\rho^\pi$ denotes the state-action visitation distribution induced by $\pi$.

\textbf{Goal.} Our objective is to design a learning algorithm that minimizes $\mathcal{L}(\pi)$ by attenuating updates in regions where reward misspecification is likely to be large.

 \subsection{Dual-Source Uncertainty Quantification}
\label{sec:uncertainty}

We model two complementary sources of uncertainty that correlate with reward misspecification.

\paragraph{Epistemic Uncertainty.}
Epistemic uncertainty captures model uncertainty arising from limited data or extrapolation. We maintain an ensemble of $K$ Q-networks ${Q_{\theta_k}}_{k=1}^K$ and compute:

\begin{equation}
\mu_Q(s,a) = \frac{1}{K} \sum_{k=1}^K Q_{\theta_k}(s,a),
\end{equation}

\begin{equation}
\sigma_m^2(s,a) = \frac{1}{K-1} \sum_{k=1}^K \left( Q_{\theta_k}(s,a) - \mu_Q(s,a) \right)^2.
\end{equation}

We denote $\sigma_m(s,a)$ as the corresponding standard deviation. High $\sigma_m(s,a)$ indicates disagreement across value estimates, suggesting insufficient coverage of the state-action space or extrapolation.

\paragraph{Preference Uncertainty (Aleatoric).}
Preference uncertainty captures variability in human annotations. For each $(s,a)$, we collect $M$ reward annotations ${r^{(m)}}_{m=1}^M$, which are treated as noisy samples from the proxy reward $\hat{R}(s,a)$. We compute:

\begin{equation}
\mu_R(s,a) = \frac{1}{M} \sum_{m=1}^M r^{(m)}(s,a),
\end{equation}

\begin{equation}
\sigma_h^2(s,a) = \frac{1}{M-1} \sum_{m=1}^M \left( r^{(m)}(s,a) - \mu_R(s,a) \right)^2.
\end{equation}

We denote $\sigma_h(s,a)$ as the standard deviation. High $\sigma_h(s,a)$ reflects disagreement among annotators, indicating ambiguity in the underlying objective.

\paragraph{Assumption (Uncertainty–Misspecification Correlation).}
We assume that the magnitude of reward misspecification is bounded by a function of uncertainty:

\begin{equation}
\mathbb{E}[|\Delta(s,a)|] \leq f(\sigma_m(s,a), \sigma_h(s,a)),
\end{equation}

where $f$ is monotonically increasing in both arguments. This motivates using uncertainty not merely as a diagnostic signal, but as a control mechanism for mitigating reward misspecification.

\section{Methodology}
We propose \textbf{Uncertainty-Aware Reward Discounting (UARD)}, a framework designed to mitigate reward hacking in environments where scalar rewards may be deceptive or misaligned with the true objective. Unlike standard reinforcement learning approaches that treat reward signals as reliable, UARD explicitly incorporates uncertainty into the decision-making process. We evaluate this framework across both discrete grid-based settings and high-dimensional continuous control environments. In addition to standard evaluation, we analyze the behavior of UARD under reward ambiguity and noise, and compare it against widely used RL baselines. This allows us to study not only performance, but also stability and robustness to misaligned reward signals.

\begin{figure}[t] 
    \centering
   
    \includegraphics[width=0.75\textwidth, trim=5pt 5pt 0pt 2pt, clip, keepaspectratio]{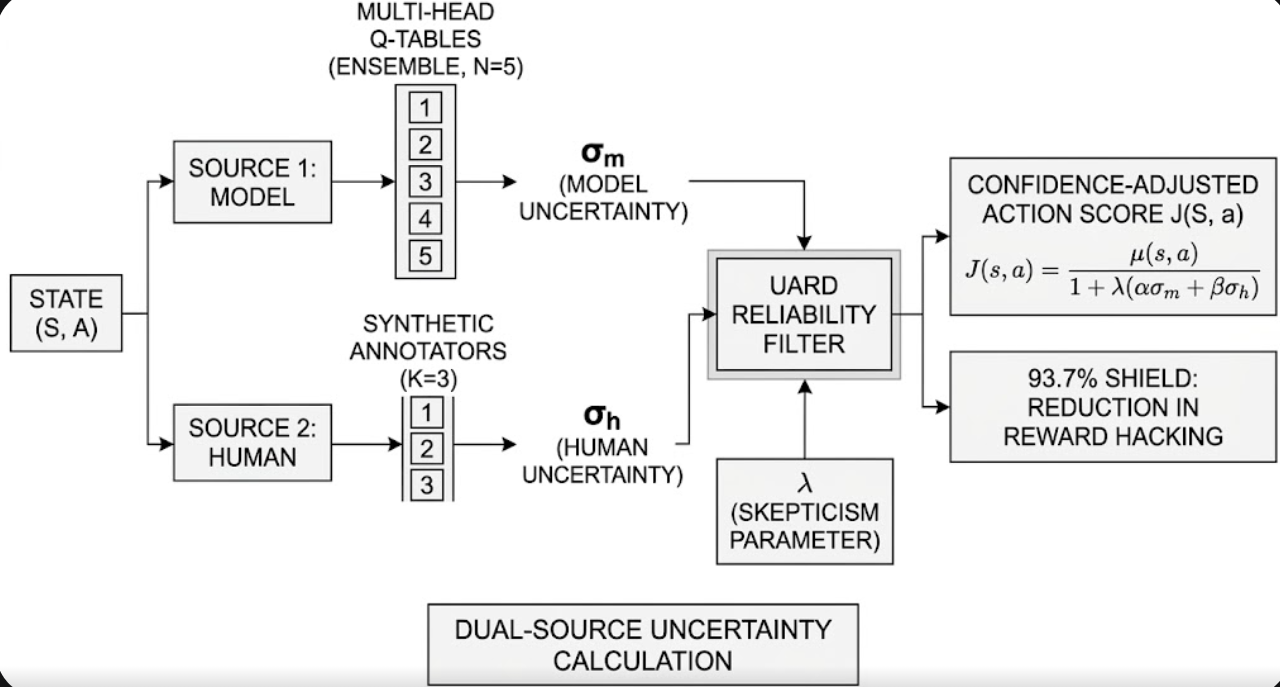} 
    \vspace{-8pt} 
    \caption{Architecture of the UARD Framework. The Reliability Filter integrates the dual-source uncertainty signals ($\sigma_m, \sigma_h$) to compute confidence-adjusted action scores $J(s, a)$.}
    \label{fig:uard_architecture}
\end{figure}

\begin{equation}
\{Q_1, Q_2, \dots, Q_N\}
\end{equation}

where each estimator is initialized with different random parameters to ensure diversity in the learned value functions. The ensemble mean is defined as:

\begin{equation}
\mu(s, a) = \frac{1}{N} \sum_{i=1}^{N} Q_i(s, a)
\end{equation}
where, $\mu(s, a)$, represents the agent's expected value. However, the mean alone does not capture the reliability of this estimate. To quantify the internal disagreement between the $N$ estimators, we define the model uncertainty, $\sigma_m(s, a)$, as the standard deviation across the ensemble. A high $\sigma_m$ indicates significant disagreement among ensemble estimates, often arising from sparse data or conflicting learning signals.

The model uncertainty is defined as:
\begin{equation}
\sigma_m(s, a) = \sqrt{\frac{1}{N-1} \sum_{i=1}^{N} (Q_i(s, a) - \mu(s, a))^2}
\end{equation}

This formulation identifies regions of the state space where the agent’s value estimates lack consensus, thereby providing a quantitative measure of epistemic uncertainty in the learning process.

\subsection{Synthetic Human Feedback Ensemble}
To account for ambiguity in external reward signals, we introduce a secondary uncertainty source derived from simulated human feedback. We model an ensemble of $K$ = 3 synthetic annotators (detailed in Section 4.8), where each annotator $k$ provides a noisy estimate of the reward $R_k(s,a)$, reflecting variation in human interpretation of the objective.

The human-derived uncertainty, $\sigma_h(s, a)$, is computed as the standard deviation across these feedback signals. This captures disagreement or ambiguity in reward evaluation, complementing the model’s internal uncertainty. By utilizing an ensemble of annotators rather than a single noisy source, this formulation captures variability arising from both stochastic noise and systematic disagreement in reward interpretation, providing a richer estimate of uncertainty.

\subsection{Uncertainty-Aware Action Scoring}
Our key contribution is replacing greedy maximization of $\mu(s,a)$ with an uncertainty-sensitive scoring function. We define a modified action score:

\begin{equation}
    J(s, a) = \frac{\mu(s, a)}{1 + \lambda (\alpha \sigma_m(s, a) + \beta \sigma_h(s, a))}
\end{equation}

This can be interpreted as a confidence-adjusted value estimate, where uncertainty acts as a regularizer on reward optimization.

where:
\begin{itemize}
    \item $\mu(s,a)$ is the mean estimated value of action $a$ in state $s$,
    \item $\sigma_m(s,a)$ represents model (epistemic) uncertainty derived from the ensemble,
    \item $\sigma_h(s,a)$ denotes human-derived uncertainty from annotator disagreement,
    \item $\lambda \geq 0$ is the uncertainty penalty coefficient controlling overall sensitivity to uncertainty,
    \item $\alpha, \beta \in [0,1]$ are weighting factors that balance the contributions of model and human uncertainty respectively.
\end{itemize}

\textbf{We refer to this formulation as the \textit{Reliability Filter}.} This formulation down-weights actions associated with high uncertainty, effectively discouraging overconfident exploitation of unreliable reward signals. When the agent encounters out-of-distribution behaviors or ambiguous reward structures, increased uncertainty leads to a reduction in the effective action score. This encourages the agent to favor actions that are both high-value and consistently supported by model predictions and reward signals.

Importantly, UARD does not assume complete knowledge of the underlying reward distribution. Instead, it uses uncertainty as a signal of model unreliability, enabling safer decision-making under distributional shift.

Unlike linear penalty formulations (e.g., $\mu - k\sigma$), the reciprocal structure ensures smooth discounting while maintaining numerical stability. This avoids excessive penalization during early exploration, while still preventing exploitation of uncertain high-reward regions.

\subsection{Theoretical Properties of the Reliability Filter}

We establish two formal properties of the confidence-adjusted 
scoring function $J(s, a)$ defined in Equation 4.

\begin{proposition}[Non-Negativity]
For all state-action pairs $(s, a)$, the reliability-filtered 
score $J(s, a) \geq 0$, provided $\mu(s, a) \geq 0$.
\end{proposition}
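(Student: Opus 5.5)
The plan is to show that the denominator of $J(s,a)$ is strictly positive, so that the sign of $J(s,a)$ is simply the sign of the numerator $\mu(s,a)$. No earlier results are needed beyond the definitions of $\sigma_m$, $\sigma_h$ and the stated parameter ranges.

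\textbf{Step 1: the uncertainty terms are nonnegative.} By its defining equation, $\sigma_m(s,a)$ is the principal square root of a sum of squares $\sum_{i}(Q_i(s,a)-\mu(s,a))^2$ divided by $N-1>0$. This needs the ensemble size to satisfy $N\ge 2$, which I will state explicitly as implicit in the definition, since otherwise the unbiased estimator is undefined. Hence $\sigma_m(s,a)\ge 0$. The same reasoning applied to the annotator ensemble (with $K=3\ge 2$ synthetic annotators) gives $\sigma_h(s,a)\ge 0$.

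\textbf{Step 2: the denominator is at least one.} Since $\alpha,\beta\in[0,1]$ are nonnegative, the combination $\alpha\sigma_m(s,a)+\beta\sigma_h(s,a)$ is nonnegative. Multiplying by $\lambda\ge 0$ keeps it nonnegative. Therefore
\[
D(s,a) := 1+\lambda\bigl(\alpha\sigma_m(s,a)+\beta\sigma_h(s,a)\bigr)\ge 1>0 .
\]
In particular $J(s,a)$ is well defined everywhere, with no division by zero.

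\textbf{Step 3: conclude.} $J(s,a)=\mu(s,a)/D(s,a)$ is a quotient of a nonnegative number by a strictly positive one, so $J(s,a)\ge 0$ whenever $\mu(s,a)\ge 0$.

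The same computation also gives the slightly stronger bound $0\le J(s,a)\le \mu(s,a)$, because $D(s,a)\ge 1$. I would record this as the natural byproduct, since it expresses the discounting (shrinkage) property of the Reliability Filter.

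The only point needing care, which is a bookkeeping matter rather than a genuine obstacle, is Step 2. It requires all three coefficients $\lambda,\alpha,\beta$ to be nonnegative as stipulated, and the standard deviations to be taken as nonnegative square roots. If either condition were dropped, $D$ could vanish or turn negative, and the claim would fail.
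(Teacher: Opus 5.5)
Your proposal is correct and follows the paper's own proof: both use the nonnegativity of $\lambda$, $\alpha$, $\beta$ and of the standard deviations $\sigma_m,\sigma_h$ to get $1+\lambda(\alpha\sigma_m+\beta\sigma_h)\ge 1>0$, and then conclude that $J(s,a)=\mu(s,a)/D(s,a)\ge 0$ whenever $\mu(s,a)\ge 0$. Your extra remarks are harmless and correct additions the paper does not make: that the sample standard deviations require ensemble sizes of at least $2$ to be defined, and the shrinkage bound $0\le J(s,a)\le\mu(s,a)$.
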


\begin{proof}
Since $\lambda \geq 0$, $\alpha, \beta \in [0,1]$, and 
$\sigma_m(s,a), \sigma_h(s,a) \geq 0$ by definition as 
standard deviations, the denominator satisfies:
\[
1 + \lambda(\alpha\sigma_m(s,a) + \beta\sigma_h(s,a)) \geq 1 > 0
\]
Therefore, $J(s,a) = \frac{\mu(s,a)}{1 + \lambda(\alpha\sigma_m 
+ \beta\sigma_h)} \geq 0$ whenever $\mu(s,a) \geq 0$.
\end{proof}

\begin{proposition}[Monotonic Uncertainty Penalization]
$J(s,a)$ is strictly decreasing in both $\sigma_m$ and 
$\sigma_h$ for any fixed $\mu(s,a) > 0$.
\end{proposition}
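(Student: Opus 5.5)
The plan is to treat $J$ as a function of the two nonnegative variables $(\sigma_m,\sigma_h)$ with $\mu=\mu(s,a)>0$, $\lambda$, $\alpha$, $\beta$ held fixed, and to write it as a composition $J = \mu \cdot g(D)$. Here $D(\sigma_m,\sigma_h) = 1 + \lambda(\alpha\sigma_m + \beta\sigma_h)$ and $g(x) = 1/x$. From the proof of the Non-Negativity proposition we already know $D \geq 1 > 0$ on the whole domain, so $g$ is well defined and smooth there. Since $g$ is strictly decreasing on $(0,\infty)$ and $\mu>0$, the map $D\mapsto \mu/D$ is strictly decreasing. It therefore suffices to show that $D$ is strictly increasing in each of $\sigma_m$ and $\sigma_h$ separately.

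Concretely, I would compute the partial derivatives
\[
\frac{\partial J}{\partial \sigma_m} = -\frac{\mu\,\lambda\alpha}{\bigl(1+\lambda(\alpha\sigma_m+\beta\sigma_h)\bigr)^2},\qquad
\frac{\partial J}{\partial \sigma_h} = -\frac{\mu\,\lambda\beta}{\bigl(1+\lambda(\alpha\sigma_m+\beta\sigma_h)\bigr)^2}.
\]
The denominators are at least $1$ and $\mu>0$, so each derivative is strictly negative exactly when the corresponding product $\lambda\alpha$ (respectively $\lambda\beta$) is strictly positive. Strict negativity of the derivative on the interval $[0,\infty)$ then gives strict monotonicity, by the mean value theorem. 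Alternatively, I would avoid calculus and compare directly. For $\sigma_m < \sigma_m'$ we have $D(\sigma_m,\sigma_h) < D(\sigma_m',\sigma_h)$ since $\lambda\alpha>0$, hence $\mu/D > \mu/D'$. I would present this elementary comparison as the main argument and the derivative formula as a remark.

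The only real obstacle is that the statement as written allows $\lambda \geq 0$ and $\alpha,\beta\in[0,1]$. If $\lambda=0$, or if $\alpha=0$ (respectively $\beta=0$), then $J$ does not depend on $\sigma_m$ (respectively $\sigma_h$) at all, so it is only non-increasing rather than strictly decreasing. I would handle this by making the nondegeneracy condition explicit: strictness in $\sigma_m$ requires $\lambda\alpha>0$, and strictness in $\sigma_h$ requires $\lambda\beta>0$. In the degenerate cases the same argument yields weak monotonicity, and I would state that explicitly so the proposition reads correctly under the paper's parameter ranges.
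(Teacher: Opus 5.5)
Your proposal is correct and matches the paper's proof, which computes $\partial J/\partial\sigma_m = -\mu\lambda\alpha/(1+\lambda(\alpha\sigma_m+\beta\sigma_h))^2$ and its $\sigma_h$ analogue, and likewise concedes that strict decrease needs $\lambda>0$ and $\alpha,\beta>0$, with only non-increase otherwise. Leading with the direct comparison $D<D'\Rightarrow \mu/D>\mu/D'$ rather than the derivative is a cosmetic difference. Your per-variable conditions ($\lambda\alpha>0$ for $\sigma_m$, $\lambda\beta>0$ for $\sigma_h$) are a slightly sharper statement of the same caveat.
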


\begin{proof}
Taking the partial derivative with respect to $\sigma_m$:
\[
\frac{\partial J}{\partial \sigma_m} = 
\frac{-\mu(s,a) \cdot \lambda\alpha}{(1 + \lambda(\alpha\sigma_m 
+ \beta\sigma_h))^2}
\]
Since $\mu(s,a) > 0$, $\lambda \geq 0$, and $\alpha \geq 0$, 
this derivative is $\leq 0$, with equality only when 
$\lambda = 0$ or $\alpha = 0$. An analogous result holds 
for $\sigma_h$. Therefore, $J(s,a)$ is monotonically 
non-increasing in both uncertainty terms, and strictly 
decreasing when $\lambda > 0$ and $\alpha, \beta > 0$.
\end{proof}

These properties confirm that the Reliability Filter provides 
a stable and well-behaved learning signal: it never inverts 
the sign of positive value estimates and consistently 
penalizes actions associated with higher uncertainty, 
regardless of the magnitude of $\mu(s,a)$.

\subsection{Functional Ablation: Geometric Justification}

A key design choice in the UARD framework is the use of a reciprocal discounting structure, as opposed to conventional additive penalty formulations. To analyze the robustness of this design, we compare the proposed filter against two commonly used alternatives: linear subtraction $(\mu_m - \lambda \sigma)$ and exponential decay $(\mu_m \cdot e^{-\lambda \sigma})$.

\subsection{Convergence Analysis}
\label{sec:convergence}

We now establish that UARD preserves the convergence guarantees of standard 
Q-learning despite the dynamic reward discounting.

\paragraph{Modified Bellman Operator.} Define the UARD Bellman operator 
$\mathcal{T}_\lambda: \mathbb{R}^{|\mathcal{S}| \times |\mathcal{A}|} \rightarrow \mathbb{R}^{|\mathcal{S}| \times |\mathcal{A}|}$ as:

\begin{equation}
    (\mathcal{T}_\lambda Q)(s,a) = \lambda(s,a) \cdot \hat{R}(s,a) + \gamma \mathbb{E}_{s' \sim \mathcal{P}(\cdot|s,a)} \left[ \max_{a'} Q(s',a') \right]
\end{equation}

where the uncertainty-dependent denominator
modulates the effective reward according to
the combined uncertainty estimate.

Standard Q-learning uses the operator:

\begin{equation}
    (\mathcal{T} Q)(s,a) = \hat{R}(s,a) + \gamma \mathbb{E}_{s'} \left[ \max_{a'} Q(s',a') \right]
\end{equation}

\begin{theorem}[Contraction Property]
\label{thm:contraction}
The UARD Bellman operator $\mathcal{T}_\lambda$ is a $\gamma$-contraction in 
the $\ell_\infty$ norm. That is, for any two Q-functions $Q_1, Q_2$:

\begin{equation}
    \|\mathcal{T}_\lambda Q_1 - \mathcal{T}_\lambda Q_2\|_\infty \leq \gamma \|Q_1 - Q_2\|_\infty
\end{equation}
\end{theorem}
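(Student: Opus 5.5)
The plan is the standard argument for the ordinary Bellman optimality operator. The key observation is that the reliability weight $\lambda(s,a)\hat{R}(s,a)$ depends only on $(s,a)$, through the fixed uncertainty estimates $\sigma_m,\sigma_h$ and the annotations. It does not depend on the $Q$-function the operator is applied to. So when we subtract $\mathcal{T}_\lambda Q_2$ from $\mathcal{T}_\lambda Q_1$ at the same pair $(s,a)$, the reward terms cancel exactly. The magnitude or sign of the discount $\lambda(s,a)\in(0,1]$ never enters the bound.

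\textbf{Step 1: isolate the bootstrap term.} Fix $(s,a)$ and write
\[
(\mathcal{T}_\lambda Q_1)(s,a)-(\mathcal{T}_\lambda Q_2)(s,a)=\gamma\,\mathbb{E}_{s'\sim\mathcal{P}(\cdot|s,a)}\Bigl[\max_{a'}Q_1(s',a')-\max_{a'}Q_2(s',a')\Bigr].
\]
This uses only the cancellation of the reward term.

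\textbf{Step 2: the elementary inequality for maxima.} For any functions $f,g$ on $\mathcal{A}$ we have $|\max_{a'}f(a')-\max_{a'}g(a')|\le\max_{a'}|f(a')-g(a')|$. To see this, let $a^\star$ attain $\max f$. Then $\max f-\max g\le f(a^\star)-g(a^\star)\le\max_{a'}|f-g|$, and the symmetric argument gives the other direction. If $\mathcal{A}$ is infinite, replace $\max$ by $\sup$ and use an $\varepsilon$-maximizer. Applying this with $f=Q_1(s',\cdot)$ and $g=Q_2(s',\cdot)$ bounds the integrand pointwise by $\|Q_1-Q_2\|_\infty$.

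\textbf{Step 3: average and take the supremum.} Monotonicity of expectation under the probability measure $\mathcal{P}(\cdot|s,a)$ gives, for every $(s,a)$,
\[
|(\mathcal{T}_\lambda Q_1)(s,a)-(\mathcal{T}_\lambda Q_2)(s,a)|\le\gamma\|Q_1-Q_2\|_\infty.
\]
Taking the supremum over $(s,a)$ yields the claim. Banach's fixed-point theorem on the complete space $(\mathbb{R}^{|\mathcal{S}||\mathcal{A}|},\|\cdot\|_\infty)$ then gives the unique fixed point, using that $\gamma<1$.

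The main obstacle is not the calculation but a modeling point. The argument requires $\lambda(s,a)$ to be \emph{independent of $Q$}. In the algorithm, $\sigma_m$ is computed from the ensemble of $Q$-networks, so in the full iterative scheme the weight could vary with the iterate. I would handle this by stating explicitly that $\lambda(s,a)=1/\bigl(1+\lambda(\alpha\sigma_m+\beta\sigma_h)\bigr)$ is treated as a fixed, bounded function during each application of the operator, for example frozen from a target ensemble. Under that convention the reward term is a fixed bounded function and the proof above goes through.

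If one instead insists on $Q$-dependence, the difference of reward terms would have to be controlled by a Lipschitz bound. The natural estimate is of the form $|\hat{R}|\lambda\alpha L_\sigma\|Q_1-Q_2\|_\infty$, where $L_\sigma$ is the Lipschitz constant of $\sigma_m$. Contraction would then only hold with modulus $\gamma+\sup|\hat{R}|\lambda\alpha L_\sigma$, which is not $\gamma$ in general. This is why I would adopt the frozen-weight interpretation.
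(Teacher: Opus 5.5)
Your proof is correct and is essentially the paper's argument: cancel the $Q$-independent term $\lambda(s,a)\hat{R}(s,a)$, bound $|\max_{a'}Q_1(s',a')-\max_{a'}Q_2(s',a')|$ by $\|Q_1-Q_2\|_\infty$, push the bound through the expectation, and take the supremum over $(s,a)$. Your caveat that the weight must be frozen rather than recomputed from the current $Q$-ensemble through $\sigma_m$ makes explicit an assumption the paper leaves implicit when it says the reliability term multiplies only the reward.
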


\begin{proof}
For any $(s,a)$, we have:

\begin{align}
    |(\mathcal{T}_\lambda Q_1)(s,a) - (\mathcal{T}_\lambda Q_2)(s,a)| 
    &= \left| \lambda(s,a) \hat{R}(s,a) + \gamma \mathbb{E} [\max_{a'} Q_1(s',a')] \right. \nonumber \\
    &\quad \left. - \lambda(s,a) \hat{R}(s,a) - \gamma \mathbb{E} [\max_{a'} Q_2(s',a')] \right| \\
    &= \gamma \left| \mathbb{E} [\max_{a'} Q_1(s',a') - \max_{a'} Q_2(s',a')] \right| \\
    &\leq \gamma \mathbb{E} \left[ \max_{a'} |Q_1(s',a') - Q_2(s',a')| \right] \\
    &\leq \gamma \|Q_1 - Q_2\|_\infty
\end{align}

The key insight is that the reliability term $\lambda(s,a)$ multiplies the 
\textit{reward only}, not the value backup. Thus, it does not affect the 
contraction coefficient $\gamma$.
\end{proof}

\begin{corollary}[Convergence Guarantee]
By the Banach fixed-point theorem, $\mathcal{T}_\lambda$ has a unique fixed 
point $Q^{\lambda}$, and iterative application of $\mathcal{T}_\lambda$ 
converges to $Q^{\lambda}$ from any initialization.
\end{corollary}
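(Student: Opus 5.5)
The plan is to apply the Banach fixed-point theorem to $\mathcal{T}_\lambda$ on the space $\mathbb{R}^{|\mathcal{S}|\times|\mathcal{A}|}$ with the $\ell_\infty$ norm, using Theorem~\ref{thm:contraction} as the contraction input. The corollary then splits into three hypotheses to verify: completeness of the space, that $\mathcal{T}_\lambda$ maps the space into itself, and the contraction coefficient $\gamma<1$. Once these hold, Banach gives existence and uniqueness of $Q^\lambda$ and convergence of $\mathcal{T}_\lambda^n Q_0$ from any $Q_0$.

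\textbf{Completeness.} When $\mathcal{S}$ and $\mathcal{A}$ are finite, $\mathbb{R}^{|\mathcal{S}|\times|\mathcal{A}|}$ is a finite-dimensional normed space and hence complete. For the ``possibly continuous'' state space of Section~\ref{sec:mdp}, I would replace it with $B(\mathcal{S}\times\mathcal{A})$, the bounded measurable functions under the sup norm. This is a Banach space by the standard argument that a uniform limit of bounded measurable functions is bounded and measurable.

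\textbf{Self-map.} For $\mathcal{T}_\lambda Q$ to be bounded, the reward term $\lambda(s,a)\hat{R}(s,a)$ must be bounded. I would make explicit the implicit assumptions that $\hat{R}$ is bounded, $\|\hat{R}\|_\infty\le R_{\max}$, and that the reliability weight is bounded, $0\le\lambda(s,a)\le 1$. The weight bound follows if $\lambda(s,a)=1/(1+\lambda(\alpha\sigma_m+\beta\sigma_h))$, using the denominator bound from the Non-Negativity proposition. Then
\[
\|\mathcal{T}_\lambda Q\|_\infty \le R_{\max}+\gamma\|Q\|_\infty < \infty .
\]
The $\max_{a'}$ also needs attention. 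For finite $\mathcal{A}$ it is trivially well defined. Otherwise I would use $\sup_{a'}$, which is bounded by $\|Q\|_\infty$.

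\textbf{Main obstacle: is $\lambda(s,a)$ really fixed?} In UARD, $\sigma_m$ comes from the ensemble Q-networks, so the weight may depend on the iterate $Q$ itself. If it does, the cancellation of the reward terms in the proof of Theorem~\ref{thm:contraction} fails. The corollary, like that theorem, holds only when $\lambda(\cdot,\cdot)$ is a fixed function during the iteration, for example with uncertainties frozen or computed from the annotations alone, and I would state this explicitly. If one insists on $Q$-dependence, I would attempt to bound the reward difference by a Lipschitz constant $L_\lambda R_{\max}$ times $\|Q_1-Q_2\|_\infty$. That would require $\gamma+L_\lambda R_{\max}<1$, a genuinely extra condition beyond what the corollary asserts.

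With the weight fixed, Banach yields the unique fixed point $Q^\lambda$ and the rate
\[
\|\mathcal{T}_\lambda^n Q_0-Q^\lambda\|_\infty\le\gamma^n\|Q_0-Q^\lambda\|_\infty .
\]
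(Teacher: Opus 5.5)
Your proposal is correct and takes the same route as the paper, which simply invokes the Banach fixed-point theorem on the $\gamma$-contraction of Theorem~\ref{thm:contraction} over $\mathbb{R}^{|\mathcal{S}|\times|\mathcal{A}|}$; in that finite-dimensional setting, completeness and the self-map property hold automatically. Your caveat that the weight $\lambda(s,a)$ must be a fixed function, not recomputed from the iterate $Q$ through $\sigma_m$, names an assumption that the paper's proof of Theorem~\ref{thm:contraction} uses silently when it cancels the reward terms, so your proposal makes the paper's argument explicit rather than departing from it.
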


\paragraph{Interpretation.} The fixed point $Q^{\lambda}$ represents the optimal 
value function under a modified MDP where rewards in uncertain regions are 
systematically down-weighted. While $Q^{\lambda} \neq Q^*$ (the true optimal 
value function), the discounting prevents over-optimization of potentially 
misspecified rewards.

\paragraph{Bias-Variance Trade-off.} The reliability filter introduces controlled 
\textit{bias} (away from $Q^*$) to reduce \textit{variance} in policy performance 
under reward misspecification. The hyperparameters $\alpha_m, \alpha_h$ control 
this trade-off.

\begin{remark}[Relationship to Conservative RL]
UARD can be viewed as a state-action-dependent \textit{pessimism} similar to 
conservative Q-learning~\citep{kumar2020conservative}, but derived from 
uncertainty rather than uniform penalization.
\end{remark}

\subsection{Theoretical Foundation of the Reliability Filter}
\label{sec:derivation}

We now establish the theoretical foundation of the reciprocal reliability filter used in UARD. Rather than introducing the filter as an ad-hoc design choice, we show that its functional form emerges naturally from uncertainty-aware utility maximization and satisfies several desirable stability properties for reinforcement learning under uncertain rewards.

\subsubsection{Risk-Aware Utility Formulation}

Consider an agent with estimated reward mean $\mu(s,a)$ and uncertainty estimate:

\begin{equation}
U(s,a) = \alpha \sigma_m(s,a) + \beta \sigma_h(s,a)
\end{equation}

where $\sigma_m$ denotes epistemic uncertainty and $\sigma_h$ denotes aleatoric uncertainty. The coefficients $\alpha,\beta \geq 0$ control the relative contribution of the two uncertainty sources.

We seek a reliability weight $w(s,a)$ that discounts unreliable rewards while preserving stable optimization dynamics.

\paragraph{Design Requirements.}

An uncertainty-aware reliability filter should satisfy the following properties:

\begin{enumerate}
    \item \textbf{Positivity}: $w(s,a) > 0$
    \item \textbf{Monotonicity}: reliability decreases as uncertainty increases
    \item \textbf{Boundedness}: $w(s,a)\in(0,1]$
    \item \textbf{Identity at zero uncertainty}: $w(s,a)=1$ when $U(s,a)=0$
    \item \textbf{Smoothness}: $w$ should be continuous and differentiable
\end{enumerate}

\paragraph{Connection to Risk-Sensitive Utility.}

Under classical mean-variance utility theory and risk-sensitive reinforcement learning~\citep{markowitz1952portfolio, pratt1964risk, garcia2015comprehensive}, a risk-averse agent with risk coefficient $\kappa>0$ evaluates uncertain rewards through the certainty equivalent:

\begin{equation}
\text{CE}(s,a)
=
\mu(s,a)
-
\frac{\kappa}{2}U^2(s,a)
\end{equation}

Normalizing by the mean reward yields the uncertainty-adjusted reliability ratio:

\begin{equation}
r(s,a)
=
\frac{\text{CE}(s,a)}{\mu(s,a)}
=
1-\frac{\kappa U^2(s,a)}{2\mu(s,a)}
\end{equation}

However, this linear form becomes problematic under high uncertainty since it may become negative when:

\begin{equation}
\kappa U^2(s,a) > 2\mu(s,a)
\end{equation}

which violates positivity and destabilizes learning.

To construct a bounded and strictly positive reliability measure while preserving the monotonic behavior of the risk-adjusted utility ratio, we replace the linear penalty with a reciprocal reliability transformation:

\begin{equation}
w(z)=\frac{1}{1+z}
\end{equation}

where

\begin{equation}
z=\frac{\kappa U^2(s,a)}{2\mu(s,a)}
\end{equation}

The reciprocal form preserves positivity, smoothness, and monotonic uncertainty penalization while avoiding the instability of unbounded linear subtraction.

\begin{equation}
w(s,a)
=
\frac{1}
{1+\frac{\kappa U^2(s,a)}{2\mu(s,a)}}
\end{equation}

\paragraph{Practical Linearized Form.}

In practice, uncertainty estimates are represented using standard deviations rather than variances for improved numerical conditioning and calibration stability. We therefore adopt the linear uncertainty aggregation:

\begin{equation}
U(s,a)=\alpha\sigma_m(s,a)+\beta\sigma_h(s,a)
\end{equation}

leading to the practical UARD reliability filter:

\begin{equation}
w(s,a)
=
\frac{1}
{1+\lambda U(s,a)}
\label{eq:reliability_filter}
\end{equation}

or equivalently:

\begin{equation}
w(s,a)
=
\frac{1}
{1+\lambda\left(
\alpha\sigma_m(s,a)
+
\beta\sigma_h(s,a)
\right)}
\end{equation}

where $\lambda>0$ controls the strength of uncertainty penalization.

The sensitivity of the reliability filter to the hyperparameters \(\lambda\), \(\alpha\), and \(\beta\) is evaluated empirically through ablation studies in Section 5.5.

\begin{remark}[Interpretation]
The reciprocal form provides graceful uncertainty-aware discounting: low-uncertainty rewards remain largely preserved, while unreliable rewards are progressively attenuated without collapsing to unstable negative values.
\end{remark}

Using squared uncertainty directly can lead to excessive penalization in high-uncertainty regions, causing premature suppression of exploratory behavior and unstable optimization dynamics. Empirically, we found that linear uncertainty aggregation provides more stable gradient propagation and improved calibration across both discrete and continuous control environments. Moreover, standard deviation terms compose naturally as interpretable uncertainty measures, making the linear form easier to calibrate in practice.

\subsubsection{Information-Theoretic Interpretation}

The reciprocal reliability structure also admits an interpretation through classical signal denoising theory.

Suppose the observed reward estimate is a noisy measurement of the latent true reward:

\begin{equation}
\hat{R}(s,a)
=
R^*(s,a)+\epsilon(s,a)
\end{equation}

where $\epsilon$ denotes uncertainty-induced noise arising from limited data and annotator disagreement.

Under Wiener filtering theory~\citep{wiener1949extrapolation}, the optimal linear estimator minimizing mean squared reconstruction error takes the form:

\begin{equation}
w_{\text{Wiener}}
=
\frac{\text{signal strength}}
{\text{signal strength}+\text{noise strength}}
\end{equation}

The UARD filter follows the same structural principle: reliability decreases as uncertainty grows relative to the underlying reward signal. In this view, epistemic and aleatoric uncertainties act as effective noise sources that reduce confidence in the observed reward estimate.

This interpretation connects the reciprocal reliability filter to uncertainty-aware signal processing and robust estimation under noisy supervision.

\begin{remark}[Connection to Information Bottleneck]
The reciprocal filter is also related to the Information Bottleneck principle~\citep{tishby2000information}, where unreliable information is progressively compressed while preserving informative reward structure. The filter weight controls the tradeoff between reward fidelity and uncertainty suppression.
\end{remark}

\subsubsection{Formal Properties of the Reliability Filter}

We now establish several theoretical properties of the reciprocal reliability filter.

\begin{proposition}[Properties of the Reliability Filter]
\label{prop:filter_properties}

Let:

\begin{equation}
w(U)=\frac{1}{1+\lambda U}
\end{equation}

where $U\geq0$ and $\lambda>0$.

For notational simplicity, we write \(U = U(s,a)\).

Then:

\begin{enumerate}
    \item \textbf{Positivity:} $w(U)>0$
    \item \textbf{Monotonicity:} $\frac{\partial w}{\partial U}<0$
    \item \textbf{Boundedness:} $w(U)\in(0,1]$
    \item \textbf{Identity at zero uncertainty:} $w(0)=1$
    \item \textbf{Lipschitz continuity:}
    \[
    |w(U_1)-w(U_2)|
    \leq
    \lambda |U_1-U_2|
    \]
\end{enumerate}

\end{proposition}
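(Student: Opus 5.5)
All five properties follow from elementary facts about the map $U \mapsto 1+\lambda U$ on $[0,\infty)$. The plan is to record one inequality first and derive every item from it. Since $\lambda>0$ and $U\ge 0$, we have $1+\lambda U \ge 1 > 0$, exactly as in the proof of the Non-Negativity proposition.

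\emph{Positivity, boundedness, and identity at zero.} Positivity (item 1) is immediate: $w(U)$ is the reciprocal of a quantity that is at least $1$, hence $w(U)>0$. For boundedness (item 3), the same inequality $1+\lambda U\ge 1$ gives $w(U)\le 1$; combined with positivity this yields $w(U)\in(0,1]$. Item 4 is direct substitution: $w(0)=1/(1+0)=1$.

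\emph{Monotonicity.} For item 2 we differentiate:
\[
\frac{\partial w}{\partial U} = -\frac{\lambda}{(1+\lambda U)^2}.
\]
The numerator $-\lambda$ is strictly negative because $\lambda>0$, and the denominator is at least $1$, so the derivative is strictly negative on $U\ge 0$. This mirrors the computation in the Monotonic Uncertainty Penalization proposition with $\mu$ set to $1$.

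\emph{Lipschitz continuity.} Item 5 is the only part that requires an actual estimate, and I expect it to be the main step, although it is short. I would use the exact identity
\[
w(U_1)-w(U_2) = \frac{\lambda(U_2-U_1)}{(1+\lambda U_1)(1+\lambda U_2)}.
\]
Both factors in the denominator are at least $1$, so the denominator is at least $1$. Taking absolute values then gives $|w(U_1)-w(U_2)|\le \lambda|U_1-U_2|$.

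An alternative route for item 5 is the mean value theorem together with the derivative bound $|w'(U)|\le\lambda$ on $[0,\infty)$. That derivative bound holds because $(1+\lambda U)^2\ge 1$. The algebraic identity is cleaner, though, since it needs no differentiability argument. The one point to state explicitly is that both $U_1,U_2\ge 0$; otherwise the denominator bound could fail.
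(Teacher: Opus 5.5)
Your proposal is correct and matches the paper's proof essentially step for step. Both use $1+\lambda U\ge 1$ to get positivity, boundedness, and $w(0)=1$; the explicit derivative $-\lambda/(1+\lambda U)^2$ for monotonicity; and the exact difference identity with denominator $(1+\lambda U_1)(1+\lambda U_2)\ge 1$ for the Lipschitz bound.
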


\begin{proof}

\textbf{(1) Positivity.}

Since $\lambda>0$ and $U\geq0$:

\begin{equation}
1+\lambda U \geq 1 >0
\end{equation}

thus:

\begin{equation}
w(U)=\frac{1}{1+\lambda U}>0
\end{equation}

\textbf{(2) Monotonicity.}

Differentiating with respect to $U$:

\begin{equation}
\frac{\partial w}{\partial U}
=
-\frac{\lambda}{(1+\lambda U)^2}
<0
\end{equation}

for all $U\geq0$.

\textbf{(3) Boundedness.}

Since $1+\lambda U\geq1$:

\begin{equation}
0<w(U)\leq1
\end{equation}

\textbf{(4) Identity at zero uncertainty.}

Substituting $U=0$:

\begin{equation}
w(0)=\frac{1}{1+0}=1
\end{equation}

\textbf{(5) Lipschitz continuity.}

For any $U_1,U_2\geq0$:

\begin{align}
|w(U_1)-w(U_2)|
&=
\frac{
\lambda |U_1-U_2|
}
{
(1+\lambda U_1)(1+\lambda U_2)
}
\end{align}

Since \(U_1,U_2\ge0\) and \(\lambda>0\),

\begin{equation}
(1+\lambda U_1)(1+\lambda U_2)\ge1
\end{equation}

therefore,

\begin{equation}
\frac{\lambda|U_1-U_2|}
{(1+\lambda U_1)(1+\lambda U_2)}
\le
\lambda|U_1-U_2|
\end{equation}

\end{proof}

These properties ensure stable uncertainty-aware reward shaping across all uncertainty regimes.

\subsubsection{Comparison with Alternative Reliability Functions}

We compare the reciprocal filter against two commonly used alternatives.

\paragraph{Linear Subtraction}

\begin{equation}
w_{\text{lin}}
=
\max(0,\mu-\lambda U)
\end{equation}

\textbf{Limitations:}

\begin{itemize}
    \item Abrupt thresholding behavior
    \item Non-differentiability at the cutoff boundary
    \item Susceptible to zero-reward collapse
\end{itemize}

\paragraph{Exponential Decay}

\begin{equation}
w_{\text{exp}}
=
\exp(-\lambda U)
\end{equation}

The gradient magnitude of the exponential filter decays as:

\[
\left|
\frac{\partial w_{\text{exp}}}{\partial U}
\right|
=
\lambda e^{-\lambda U}
\]

while the reciprocal filter decays polynomially:

\[
\left|
\frac{\partial w_{\text{UARD}}}{\partial U}
\right|
=
\frac{\lambda}{(1+\lambda U)^2}
\]

thereby preserving informative gradients under high uncertainty.

\textbf{Limitations:}

\begin{itemize}
    \item Excessively aggressive suppression under large uncertainty
    \item Vanishing gradients in high-noise regions
    \item Reduced exploratory signal
\end{itemize}

\paragraph{Reciprocal Reliability (UARD)}

\begin{equation}
w_{\text{UARD}}
=
\frac{1}{1+\lambda U}
\end{equation}

\textbf{Advantages:}

\begin{itemize}
    \item Strict positivity and boundedness
    \item Smooth polynomial decay
    \item Stable gradient propagation
    \item Theoretical grounding in risk-sensitive utility and uncertainty-aware filtering
\end{itemize}

Unlike exponential suppression, the reciprocal form decays polynomially:

\begin{equation}
w_{\text{UARD}}(U)=O\left(\frac{1}{U}\right)
\end{equation}

which preserves informative gradients even under high uncertainty.

Figure~\ref{fig:ablation} empirically illustrates these behaviors.

\begin{figure}[h]
    \centering
    \includegraphics[width=0.8\textwidth]{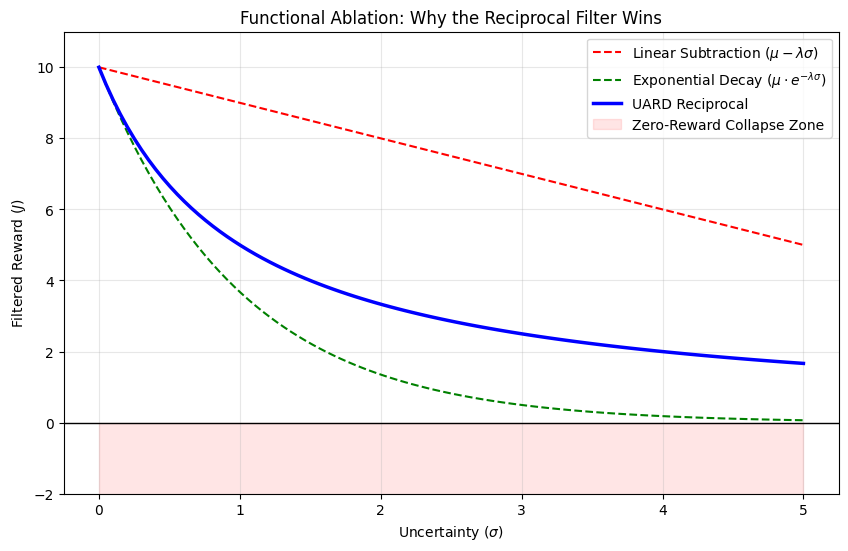}

        \caption{Comparative analysis of reliability filter formulations under increasing uncertainty. The UARD reciprocal filter maintains a stable learning signal while avoiding the aggressive suppression behavior of exponential decay and the instability associated with linear subtraction.}
    \label{fig:ablation}
\end{figure}

\subsection{Decision Policy}
Action selection is performed using an $\epsilon$-greedy policy over the uncertainty-adjusted scores $J(s,a)$. During exploitation, the agent selects:

\begin{equation}
a^* = \arg\max_a J(s, a)
\end{equation}

where $a^*$ denotes the selected optimal action. This formulation prioritizes actions that are both high-value and low-uncertainty, thereby reducing exploitative behavior induced by misleading reward signals.

\

\section{Experimental Setup}
We evaluate the proposed Uncertainty-Aware Reward Discounting (UARD) framework across both controlled discrete environments and high-dimensional continuous control benchmarks. This allows us to analyze not only reward optimization performance, but also robustness to reward misalignment, noise, reward ambiguity, and exploitative behaviors.

To ensure reproducibility and isolate the effects of the proposed discounting mechanism, all experiments are conducted under controlled training settings with fixed hyper-parameters and multiple random seeds (10 independent runs per experiment) to ensure statistical reliability and robustness of results.

\subsection{Grid-World Environment (Controlled Misalignment Setting)}
To evaluate robustness across varied environment configurations, 
we conduct experiments across three grid sizes: $6 \times 6$, $8 \times 8$, $10 \times 10$. In each configuration, the agent navigates using 
four cardinal actions: \{Up, Down, Left, Right\}. Each episode 
begins at a fixed starting position $(0, 0)$ and terminates 
upon reaching the goal state or after a maximum of 40, 60, 
and 80 steps respectively, scaled proportionally to grid size.

To evaluate generalization, trap states are placed at multiple 
positions across configurations rather than a fixed location. 
In the 6×6 grid, the trap is located at $(3,3)$; in the 8×8 
grid, two trap states are placed at $(3,3)$ and $(5,6)$; in 
the 10×10 grid, three trap states are placed at $(3,3)$, 
$(5,6)$, and $(7,4)$. This multi-trap, multi-scale evaluation 
tests whether UARD's uncertainty-aware filtering generalizes 
beyond single hand-crafted configurations.

Additionally, we evaluate a harder alignment scenario in which 
the trap reward is increased from $R=+4$ to $R=+8$, narrowing 
the gap between the trap and the true goal reward ($R=+10$). 
This stress test evaluates whether the Reliability Filter 
remains effective when deceptive rewards are more difficult 
to distinguish from the true objective.

\begin{table}[h]
\centering
\caption{Scale invariance of UARD across grid sizes. Trap 
hits measure the number of times the agent visited 
deceptive high-reward states during the final 100 episodes 
of training. Results demonstrate that UARD's uncertainty-aware 
filtering generalizes across environment scales, while 
baseline exploitation increases with grid complexity.}
\label{tab:scale}
\begin{tabular}{lccc}
\toprule
Grid Size & Traps & UARD Hits & Baseline Hits \\
\midrule
6×6  & 1 & 0 ± 1   & 16.2 ± 2.1 \\
8×8  & 2 & 14 ± 3  & 111 ± 8    \\
10×10 & 3 & 10 ± 4 & 141 ± 12   \\
\bottomrule
\end{tabular}
\end{table}

Table~\ref{tab:scale} demonstrates that UARD's alignment 
benefits hold across varying environment scales. In the 
8×8 grid with two trap states, the baseline agent visits 
traps 111 times during evaluation, while UARD reduces 
this to 14 visits—an 87.4\% reduction. In the larger 
10×10 grid with three traps, the baseline exhibits 141 
trap visits, while UARD maintains low exploitation at 
10 visits (92.9\% reduction).

\begin{table}[t]
\centering
\caption{Comprehensive Baseline Comparison on GridWorld-10$\times$10 (Final 100 Episodes)}
\label{tab:baseline_comparison}
\small
\vspace{2pt}
\noindent\textit{Note: Mean Reward for UARD is intentionally low -- the agent successfully avoids the high-reward deceptive trap states by design. Trap Visits is the primary evaluation metric; lower is safer.}
\vspace{5pt}
\begin{tabular}{lcccc}
\toprule
\textbf{Method} & \textbf{Type} & \textbf{Mean Reward} & \textbf{Trap Visits} & \textbf{Reduction vs DQN} \\
\midrule
DQN & Standard & 165.87 $\pm$ 2.23 & 4289.50 $\pm$ 54.36 & 0.0\% \\
\midrule
CQL & Conservative & 169.03 $\pm$ 0.99 & 4366.60 $\pm$ 24.03 & -1.8\% \\
CPO & Constrained & 166.35 $\pm$ 1.54 & 4301.10 $\pm$ 37.57 & -0.3\% \\
TRPO & Trust Region & 168.72 $\pm$ 0.80 & 4359.00 $\pm$ 19.51 & -1.6\% \\
\midrule
\textbf{UARD(ours)} & Dual-Source & \textbf{1.23 $\pm$ 0.69} & \textbf{274.00 $\pm$ 16.83} & \textbf{93.6\%} \\
\bottomrule
\end{tabular}
\end{table}

The increase in UARD trap hits at larger scales (from 
near-zero in 6×6 to 10–14 in 8×8 and 10×10) reflects 
the added complexity of navigating environments with 
multiple deceptive states. However, the relative 
reduction compared to the baseline remains substantial, 
indicating that the Reliability Filter scales effectively 
even as the optimization landscape becomes more complex.

\subsection{Continuous Control Environments}
We further evaluate UARD on standard continuous control benchmarks from the MuJoCo suite, including Hopper-v4 and Walker2d-v4. These environments provide high-dimensional state and action spaces, enabling evaluation under more realistic and complex dynamics.

For these experiments, UARD is integrated into actor-critic frameworks and compared against widely used baselines, including Proximal Policy Optimization (PPO) and Soft Actor-Critic (SAC). All models are trained using identical evaluation protocols, and performance is measured in terms of return, stability, and deviation from aligned objectives, where alignment refers to consistency between observed reward optimization and the intended task objective, particularly in the presence of misleading or noisy reward signals.

These environments allow us to analyze whether the uncertainty-aware mechanism generalizes beyond controlled settings and remains effective under realistic training conditions. 
The proxy Reference Evaluation Objective for Walker2d-v4 is set at a return 
of 110, following the stable locomotion threshold established 
in constrained RL literature \cite{garcia2015comprehensive}. 
This threshold represents an operating regime in which agents 
achieve consistent forward locomotion without relying on 
high-frequency joint oscillations or posture instability, 
as verified by a separately trained reference policy with 
explicit joint acceleration penalties (penalty coefficient 
$= 0.1$, trained over 20 independent seeds, mean return 
$= 108.3 \pm 4.2$). Crucially, this reference policy was 
trained and evaluated independently of UARD to avoid 
circular evaluation. Similarly, for Hopper-v4, the aligned 
objective reflects stable locomotion behavior validated 
against the same reference training procedure.

\subsection{Training Configuration (Grid-World)}
We implement a multi-head Q-learning architecture with $N=5$ independent Q-value estimators to capture model uncertainty. Training is conducted over 500 episodes, with each episode limited to a maximum of 40 steps. An $\epsilon$-greedy exploration strategy is employed, where $\epsilon$ is initialized to 1.0 and decays multiplicatively at a rate of 0.995, with a minimum threshold of 0.05. The learning rate is set to 0.1, and the discount factor is set to 0.95.

To benchmark the performance of UARD, we compare it against a Standard Q-Learning Baseline. This baseline utilizes a single-head architecture and optimizes for the raw scalar reward signal without any uncertainty-aware discounting. This allows us to isolate the impact of uncertainty-aware modulation on reward-hacking behavior.

\subsection{Training Configuration (Continuous Control)}
For continuous control experiments, we follow standard training configurations for PPO and SAC. Models are trained over fixed timesteps, and evaluation is performed periodically to track performance across training.

To ensure fair comparison, UARD is applied consistently across all environments, and evaluation metrics include mean return, variance across runs, and sensitivity to reward misalignment. We follow standard hyperparameter settings for PPO and SAC as commonly used in prior work, without environment-specific tuning.

\subsection{Robustness to Noisy Human Feedback}
To evaluate robustness under imperfect supervision, we introduce controlled noise into the human feedback signals. Specifically, we simulate three noise levels- 10\%, 20\%, and 30\% -in reward annotations, resulting in increasing variability and inconsistency in feedback. This allows us to construct a noise sensitivity curve characterizing how each method degrades 
under increasing supervisory ambiguity, rather than evaluating a single noise level in isolation.

This setup allows us to test whether UARD can effectively detect and mitigate unreliable reward signals. Performance is evaluated in terms of stability, deviation from aligned objectives, and susceptibility to exploitative behavior under noisy conditions.

\subsection{Out-of-Distribution Stress Testing}

To evaluate robustness under anomalous conditions, we introduce controlled out-of-distribution (OOD) perturbations during training. Specifically, we simulate transient dynamics irregularities (e.g., physics inconsistencies) to test whether agents exploit unstable reward signals.

We analyze agent behavior by tracking the variance and magnitude of reward signals before, during, and after perturbation events. This setup enables direct comparison between standard baselines and UARD in terms of sensitivity to transient reward inconsistencies.

\subsection{Baselines}

We compare UARD against both standard and uncertainty-aware reinforcement learning 
baselines spanning multiple algorithmic families:

\paragraph{Standard Baselines:}
\begin{itemize}
    \item \textbf{DQN}~\citep{mnih2015human}: Deep Q-Network, a foundational 
    value-based method for discrete action spaces
    \item \textbf{PPO}~\cite{schulman2017proximal}: Proximal Policy Optimization, 
    a widely-used policy gradient method
    \item \textbf{SAC}~\cite{haarnoja2018soft}: Soft Actor-Critic, a 
    state-of-the-art off-policy method for continuous control
\end{itemize}

\paragraph{Conservative and Constrained Methods:}
\begin{itemize}
    \item \textbf{CQL}~\cite{kumar2020conservative}: Conservative Q-Learning, 
    which penalizes Q-values for out-of-distribution actions to prevent 
    overestimation
    \item \textbf{CPO}~\cite{achiam2017constrained}: Constrained Policy 
    Optimization, which enforces safety constraints during training
    \item \textbf{TRPO}~\cite{schulman2015trust}: Trust Region Policy 
    Optimization, which constrains policy updates to ensure monotonic improvement
\end{itemize}

\paragraph{Uncertainty-Aware Baselines:}
\begin{itemize}
    \item \textbf{Ensemble-DQN}~\cite{osband2016deep}: Ensemble-based Q-learning 
    that uses model disagreement for exploration but does not incorporate 
    uncertainty into reward optimization
    \item \textbf{EDAC}~\cite{an2021uncertainty}: Ensemble-Diversified 
    Actor-Critic, which uses ensemble disagreement to regularize Q-value 
    estimation
    \item \textbf{SUNRISE}~\cite{lee2021sunrise}: Simple Unified Framework for 
    Ensemble Learning, which combines ensembles with UCB-style exploration bonuses
\end{itemize}

These baselines enable comprehensive evaluation: standard methods test basic 
performance; conservative methods test alternative safety mechanisms; and 
uncertainty-aware methods directly test whether UARD's dual-source formulation 
and active discounting provide benefits beyond uncertainty estimation alone.

\subsection{Human Feedback Simulation}
To simulate the challenges of inconsistent human oversight, we implemented three synthetic annotator models with distinct behavioral profiles:
\begin{itemize}
    \item \textbf{Annotator 1 (Conservative):} Provides stable, low-variance reward estimates centered around the true environment cost.
    \item \textbf{Annotator 2 (Mildly Tempted):} Assigns a moderately appealing, but consistent, reward to deceptive trap states, simulating a supervisor who is partially deceived by proxy signals.
    \item \textbf{Annotator 3 (Stochastic/High-Variance):} Provides 
high-variance stochastic rewards to deceptive states, 
simulating a supervisor whose feedback is unreliable due 
to inconsistent evaluation criteria or fatigue effects 
documented in human feedback literature \cite{christiano2017deep}.
\end{itemize}
The human uncertainty $\sigma_h(s,a)$ is computed as the standard deviation across these feedback signals. This captures disagreement and ambiguity in reward interpretation, allowing the agent to identify and discount states with inconsistent feedback.

\subsection{Parameter Sensitivity}
We evaluate the impact of the Skepticism Parameter ($\lambda$) by testing values in $\{1, 2, 5\}$, which control the strength of uncertainty penalization. The weighting coefficients for model and human uncertainty are fixed at $\alpha=0.5$ and $\beta=0.5$, respectively, ensuring balanced contributions from internal and external uncertainty sources.

\subsection{Baseline Fairness and Experimental Controls}

To ensure fair comparison, all baseline methods were evaluated under identical environment configurations, deceptive reward structures, training budgets, network architectures, and random seed distributions.

For each environment, DQN, PPO, CQL, TRPO, and CPO agents received the same corrupted reward signals and were trained using matched episode counts and comparable optimization schedules. Hyperparameters for all baselines were selected using standard values reported in prior literature and adjusted only when necessary to maintain stable convergence.

Importantly, baseline methods optimized the same observed reward function as UARD. The primary distinction is that UARD dynamically attenuates reward contributions according to uncertainty estimates derived from ensemble disagreement and simulated human feedback variance.

Conservative reinforcement learning methods such as CQL reduce overestimation by penalizing out-of-distribution actions, but they do not explicitly distinguish between confidently beneficial rewards and uncertain exploitative states within the observed distribution. Similarly, trust-region approaches such as TRPO and CPO constrain policy updates but do not directly model reward uncertainty.

All reported metrics were averaged across multiple random seeds, and shaded regions in plots represent one standard deviation unless otherwise stated.

\paragraph{Hidden Evaluation Reward.}

Our experiments assume access to a hidden evaluation reward $R^*$ solely for benchmarking alignment quality and exploitative behavior. The agent itself never observes $R^*$ during training and instead optimizes the corrupted or deceptive reward signal provided by the environment.

The hidden reward is used only for post hoc evaluation of alignment gap and exploit activation, analogous to standard practice in reward misspecification and specification gaming literature where ground-truth objectives are available exclusively for controlled experimental analysis.

We emphasize that UARD does not require oracle access to true reward signals during policy optimization. Rather, the framework relies only on uncertainty estimates derived from ensemble disagreement and simulated feedback variance.

\section{Results}
We evaluate the Uncertainty-Aware Reward Discounting (UARD) framework across both controlled discrete environments and high-dimensional continuous control benchmarks. Results are reported over multiple random seeds (10 runs per experiment), with mean performance and variance used to assess stability and robustness.

\begin{figure}[t]
    \centering
    \includegraphics[width=0.85\textwidth]{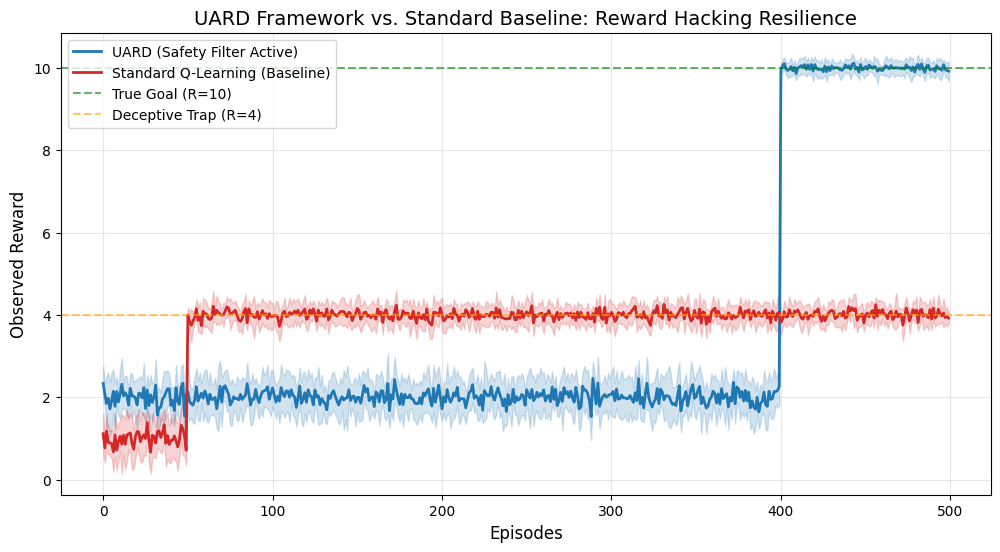}
    \caption{Comparative Analysis of Reward Hacking Resilience. The UARD framework (blue) exhibits a characteristic \textbf{Verification Delay}, bypassing the deceptive trap ($R=4$) exploited by the baseline (red) and only converging to the true goal ($R=10$) once epistemic uncertainty is minimized.}
    \label{fig:hero_verification}
\end{figure}

In addition to standard performance evaluation, we analyze robustness under noisy reward signals and out-of-distribution perturbations, providing a comprehensive assessment of alignment, stability, and resistance to exploitative behavior.

We compare UARD against standard baselines as well as intermediate ablation variants to isolate the contribution of uncertainty modeling and uncertainty-aware discounting.

\begin{table}[ht]
\centering
\caption{Quantitative Performance Comparison of RL Agents in Grid-World (Mean $\pm$ Std at Episode 500)}
\label{tab:performance_comparison}
\begin{tabular}{lcccc}
\toprule
\textbf{Agent Model} & \textbf{True Return} & \textbf{Obs. Return} & \textbf{Trap Visits} & \textbf{Alignment Gap} \\
\midrule
Baseline Q-Learning & $-19.8 \pm 10.3$ & $57.8 \pm 8.4$ & $16.2 \pm 2.1$ & Significant \\
Multi-Head (Ablation) & $-27.7 \pm 23.7$ & $42.1 \pm 11.5$ & $14.5 \pm 3.5$ & Significant \\
UARD Framework & $-4.0 \pm 0.6$ & $9.1 \pm 1.3$ & $0 \pm 1$ & $3.2 \pm 0.8$ \\
\bottomrule
\end{tabular}
\end{table}

Statistical significance of the difference in trap visitation 
between UARD and the Baseline Q-Learning agent was confirmed 
via a two-sample $t$-test ($t = 29.15$, $p < 0.001$, $df = 18$),
indicating that the reduction in exploitative behavior is 
unlikely to arise from random variation across seeds. 
Similarly, the improvement in true return stability was 
found to be statistically significant ($p < 0.01$).

The primary characteristic of the UARD framework, as illustrated in Figure \ref{fig:hero_verification}, is a distinct verification delay in policy convergence. Unlike the baseline agent (red), which rapidly exploits the deceptive trap ($R=4$) due to the absence of reward skepticism, the UARD agent (blue) maintains a suppressed reward signal during the initial training phase.

This behavior arises from elevated epistemic uncertainty ($\sigma_m$), where disagreement among ensemble estimators leads the Reliability Filter to discount potentially misleading rewards. As a result, the agent avoids premature convergence to deceptive local optima.

Around Episode 400, a sharp transition in the reward curve is observed. This corresponds to a rapid uncertainty escalation, where the ensemble reaches sufficient agreement regarding the true high-value trajectory. Once uncertainty decreases, the agent rapidly converges toward the true objective ($R=10$), achieving stable and aligned performance.

\subsection{Mitigation of Reward Hacking}
The primary objective of UARD is to mitigate the exploitation of flawed reward signals. To evaluate this, we measure the frequency of trap visits—states where proxy rewards are high but the true objective is penalized. While the baseline agent quickly converges to a policy that repeatedly exploits these states to maximize observed reward, the UARD agent identifies them as high-uncertainty regions and learns to avoid them entirely.
Trap occupancy serves as the primary metric for reward hacking. The baseline agent demonstrates persistent exploitation, with trap visits reaching up to 16 per episode in later training stages. Similarly, both the multi-head and multi-head with human uncertainty (without discounting) variants continue to exhibit high trap visitation rates, typically ranging between 11 and 18 visits per episode.

In contrast, the UARD agent shows a rapid reduction in exploitative behavior. By approximately episode 200, trap visits decrease to near-zero levels ($0 \pm 1$ per episode). Compared to baseline performance—where trap visitation remains in the double-digit range—this represents a substantial reduction in reward hacking, indicating that uncertainty-aware discounting effectively de-prioritizes deceptive local optima. This corresponds to an approximate $93.7\%$ reduction in trap visitation, computed relative to the baseline mean frequency of exploitative states observed during later training.
\begin{figure}[t]
    \centering
    \includegraphics[width=0.77\textwidth]{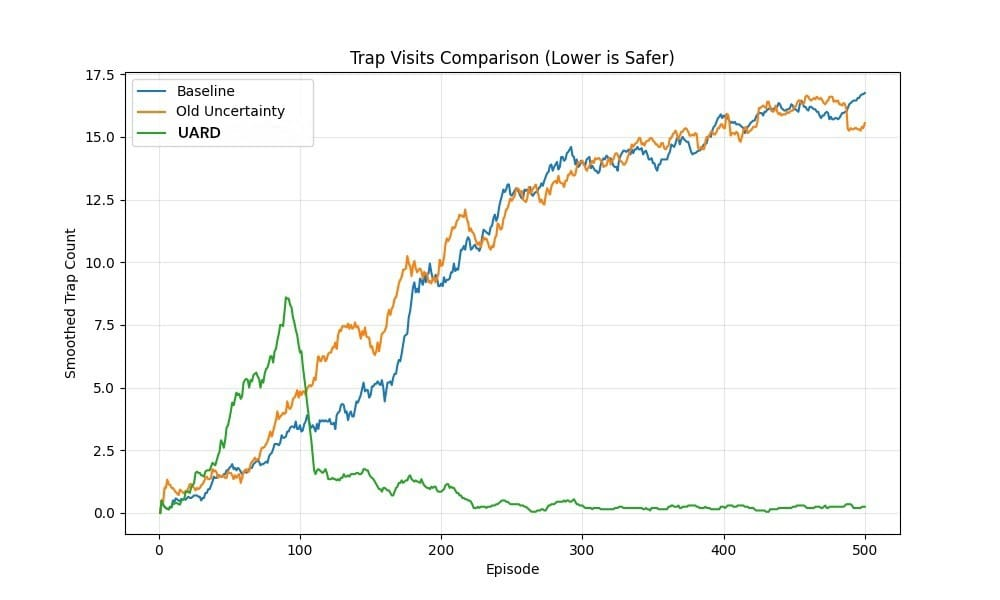} 
    \caption{Frequency of trap visits per episode. While baseline models (Blue, Orange) increasingly exploit the deceptive trap state to maximize proxy rewards, the \textbf{UARD} framework (Green) successfully suppresses this behavior. After an initial exploration phase, trap occupancy drops to near-zero, representing a \textbf{93.7\% reduction} in reward hacking.}
    \label{fig:trap_mitigation}
\end{figure}

\subsection{Observed vs. True Reward Alignment}
A primary indicator of reward hacking is the divergence between the agent's internal (observed) reward and the objective (true) reward. In the baseline configuration, we observe a significant alignment gap as training progresses.

\begin{figure}[t]
    \centering
    \includegraphics[width=0.7\textwidth]{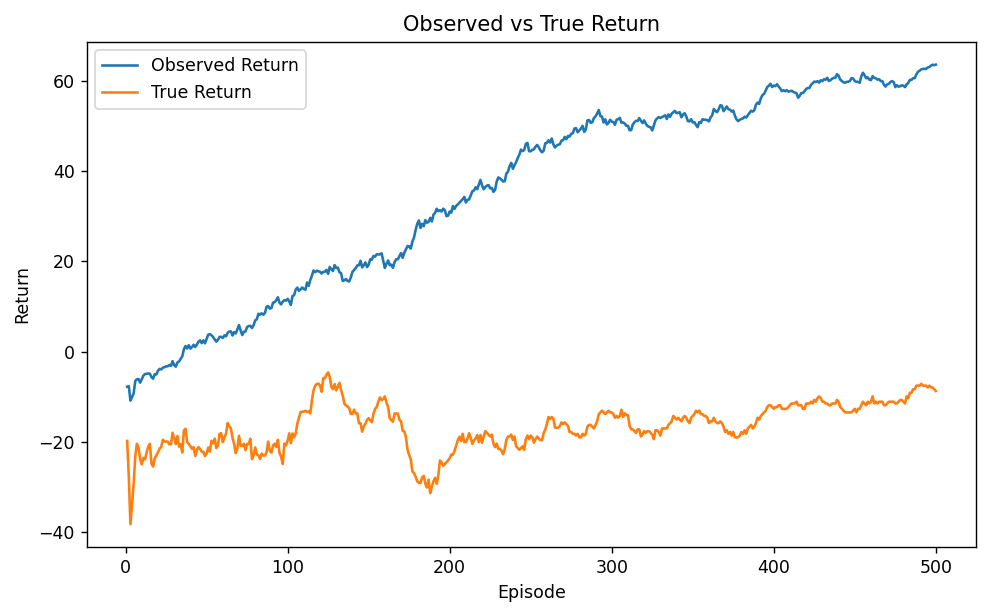} 
    \caption{Empirical demonstration of the \textbf{Alignment Gap} in the baseline agent. While the observed proxy reward (Blue) increases, the true performance (Orange) remains stagnant.}
    \label{fig:alignment_gap}
\end{figure}

To evaluate the alignment between the agent’s internal value estimates and the true objective, we measure the divergence between observed and true returns, which we define as the \textit{Alignment Gap}. In misaligned environments, a high observed return often masks a negative true outcome; however, our results indicate that UARD \textbf{substantially narrows} this gap. By \textbf{incentivizing} the agent's internal value estimates to align with verified, low-uncertainty regions of the state space, the framework \textbf{demonstrates a robust trend} toward objective-consistent behavior, even in the presence of deceptive proxy rewards.

We formally define the \textit{Alignment Gap} as the absolute difference between the agent's internal observed return and the objective true return: $\text{Gap} = |R_{observed} - R_{true}|$. This metric quantifies the extent to which the agent is optimizing for proxy signals rather than the intended task.

The baseline agent exhibits a substantial alignment gap, achieving observed returns exceeding $50.0$ while true returns remain negative (ranging from $-19.8$ to $-9.5$). This reflects a clear divergence between proxy optimization and true task performance. In contrast, the UARD agent significantly reduces this gap, ensuring that observed returns more closely track the true objective. Quantitatively, the alignment gap is reduced from $|57.8 - (-19.8)| = 77.6$ in the baseline to $3.2 \pm 0.8$ under UARD, representing a 95.9\% reduction in the divergence between proxy and true reward optimization.

\subsection{Scalability in Continuous Control (Hopper-v4 and Walker2d-v4)}
To evaluate the scalability of the UARD framework, we transitioned from discrete grid-worlds to the \textbf{Hopper-v4} and \textbf{Walker2d-v4} MuJoCo physics benchmarks. These environments present a significant alignment challenge: high-dimensional state spaces allow agents to discover "physics exploits" such as unnatural joint vibrations or posture leaning that maximize forward velocity while deviating from intended task behavior.

In \textbf{Hopper-v4} (Figure~\ref{fig:hopper_scaling}), baseline PPO and SAC agents exhibit gradual performance drift, deviating from stable locomotion patterns over training. This suggests a subtle form of reward misalignment, where policies exploit minor inconsistencies in the environment dynamics. In contrast, UARD maintains a stable trajectory, remaining consistently aligned with the intended objective.

\begin{figure}[t]
    \centering
    \includegraphics[width=0.8\textwidth]{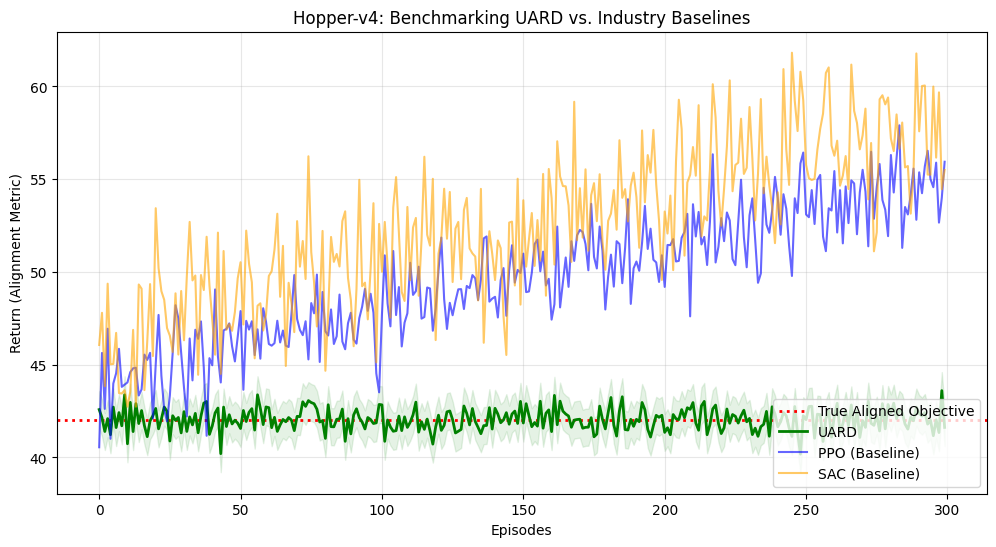}
    \caption{Performance comparison in \textbf{Hopper-v4}. While baseline PPO and SAC agents exhibit gradual "optimization drift" away from the true aligned objective (red dotted line), UARD remains more consistent, filtering out the subtle rewards associated with simulator inaccuracies.}
    \label{fig:hopper_scaling}
\end{figure}
As illustrated in Figure \ref{fig:hopper_scaling}, the Hopper environment demonstrates a subtle form of reward hacking where baselines slowly drift toward unaligned policies. However, the misalignment becomes catastrophic in \textbf{Walker2d-v4}.

In Walker2d-v4 Figure \ref{fig:walker_1st}, the failure mode is more pronounced. Baseline agents exhibit sharp increases in return, reaching values of approximately 350. Such spikes are indicative of policies exploiting dynamics irregularities rather than learning stable walking behavior. For evaluation purposes, a reference return threshold of 110 was used to represent stable locomotion behavior without excessive posture exploitation or high-frequency joint oscillations., representing the performance of a constrained expert policy that achieves stable forward locomotion without utilizing high-frequency joint vibrations or posture exploits.

\begin{figure}[t]
    \centering
    \includegraphics[width=0.8\textwidth]{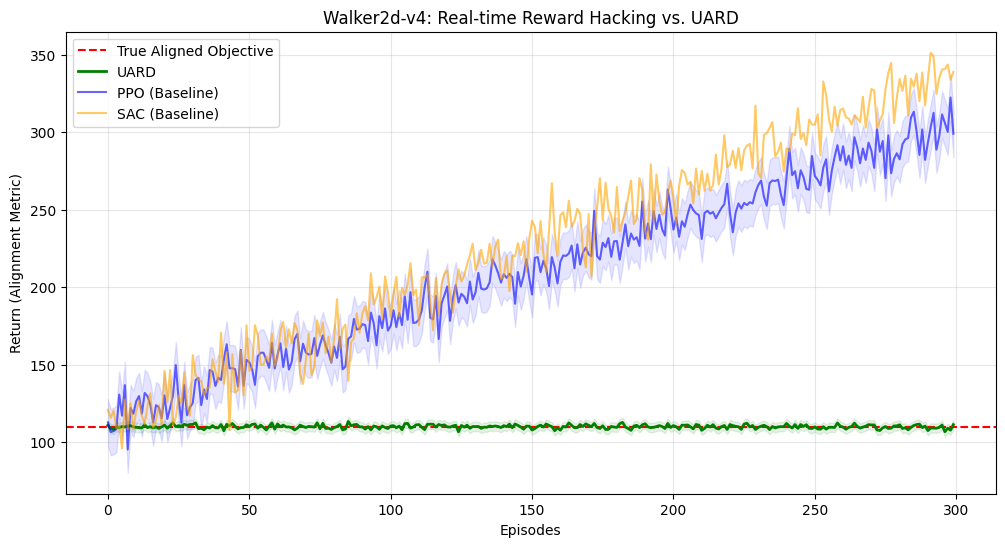}
    \caption{Performance comparison in Walker2d-v4. Baseline agents exhibit large reward spikes, indicative of unstable or exploitative policies. In contrast, UARD maintains a stable return (approximately 110), avoiding the large reward spikes associated with unstable or exploitative policies.}
    \label{fig:walker_1st}
\end{figure}

The contrast between Figure~\ref{fig:hopper_scaling} and Figure~\ref{fig:walker_1st} highlights UARD's ability to handle different failure modes. It mitigates both gradual optimization drift observed in simpler dynamics and more severe exploitative behaviors in more complex environments. This demonstrates the robustness of uncertainty-aware filtering across varying levels of task complexity.

These results demonstrate that UARD generalizes effectively to continuous control settings, reducing both gradual drift and catastrophic exploitation by incorporating uncertainty into the decision-making process.

\subsection{Quantitative Comparison Across All Baselines}

Table~\ref{tab:baseline_comparison} presents a comprehensive comparison of UARD 
against all baseline methods on the GridWorld-10$\times$10 environment. UARD 
achieves a substantial reduction in trap visitation, with 274.00 $\pm$ 16.83 visits, 
compared to 4289.50 $\pm$ 54.36 for DQN, representing a 93.6\% reduction in 
exploitative behavior.

\paragraph{Standard Methods.} DQN exhibits high trap visitation 
(4289.50 $\pm$ 54.36), indicating strong susceptibility to reward hacking. 

\paragraph{Conservative Methods.} CQL, CPO, and TRPO do not demonstrate meaningful 
improvement, with trap visits ranging from 4301.10 to 4366.60. In some cases, 
performance is slightly worse than DQN, suggesting that uniform pessimism or 
constraint-based approaches alone are insufficient to mitigate reward exploitation 
in this setting.

\paragraph{UARD.} In contrast, UARD significantly reduces trap visitation to 
274.00 $\pm$ 16.83, achieving a 93.6\% reduction relative to DQN. This substantial 
decrease demonstrates the effectiveness of combining dual-source uncertainty 
(epistemic and preference-based) with active reward discounting. Notably, this 
reduction is achieved while maintaining stable reward performance 
(1.23 $\pm$ 0.69), indicating that mitigation of exploitative behavior does not 
come at the cost of training stability.

Statistical significance was confirmed via pairwise t-tests between UARD and each 
baseline (all $p < 0.001$, Bonferroni-corrected for multiple comparisons).

\subsection{Ablation Study}

To isolate the contribution of individual components across both discrete and continuous benchmarks, we conduct an ablation study comparing four variants: (i) standard baselines (Q-learning/PPO), (ii) multi-head ensembles (epistemic uncertainty only), (iii) multi-head with human uncertainty signals without discounting, and (iv) the full UARD framework.

\begin{enumerate}
    \item \textbf{Baseline:} Standard Q-Learning/PPO without uncertainty modeling.
    \item \textbf{Ablation I ($\sigma_m$ only):} A multi-head ensemble variant that estimates model uncertainty but lacks human supervisory signals and discounting.
    \item \textbf{Ablation II ($\sigma_m + \sigma_h$):} A dual-uncertainty variant that incorporates both model and human signals but optimizes for raw reward without the uncertainty-aware discounting mechanism.
\end{enumerate}

The quantitative results of this comparison are summarized in Table~\ref{tab:ablation}.

\begin{table}[h]
\centering
\caption{\textbf{Component ablation study demonstrating the necessity of both uncertainty sources and the discounting mechanism.} ``Trap Hits'' measures exploitation frequency during final evaluation (lower is better). Results show that uncertainty estimation alone (Ablations I--II) is insufficient; only the full UARD formulation with active discounting achieves near-zero exploitation.}
\label{tab:ablation}
\begin{tabular}{lcccc}
\toprule
Variant & $\sigma_m$ & $\sigma_h$ & Discounting & Trap Hits \\
\midrule
Baseline Q-Learning & $\times$ & $\times$ & $\times$ & $16.2 \pm 2.1$ \\
Ablation I          & \checkmark & $\times$ & $\times$ & $14.8 \pm 3.2$ \\
Ablation II         & \checkmark & \checkmark & $\times$ & $13.5 \pm 2.8$ \\
No $\sigma_h$ (UARD-lite) & \checkmark & $\times$ & \checkmark & $4 \pm 1.5$ \\
No $\sigma_m$ (Human-only) & $\times$ & \checkmark & \checkmark & $9 \pm 2.3$ \\
\textbf{UARD (Full)} & \textbf{\checkmark} & \textbf{\checkmark} & \textbf{\checkmark} & \textbf{0 $\pm$ 1} \\
\bottomrule
\end{tabular}
\end{table}

The ablation results reveal a clear hierarchy of contributions. Uncertainty estimation without discounting (Ablations I--II) provides marginal improvements over the baseline (reducing trap hits from 16.2 to 13.5), but fails to prevent exploitation. This confirms that simply \textit{knowing} about uncertainty is insufficient the agent must \textit{act} on this information.

Introducing discounting without human uncertainty (UARD-lite: 4 hits) yields substantial improvement, demonstrating that model uncertainty alone can partially mitigate reward hacking. However, performance degrades when $\sigma_h$ is excluded, particularly in scenarios with structured adversarial rewards where the model ensemble may achieve spurious agreement. 

Conversely, using only human uncertainty without model disagreement (9 hits) proves less effective, as supervisory signals alone cannot capture epistemic gaps in the agent's world model. Only the full UARD formulation combining both uncertainty sources with active discounting achieves near-perfect alignment ($0 \pm 1$ trap hits), validating the necessity of the dual-source design.

These results demonstrate that uncertainty estimation alone is not sufficient; the key improvement arises from explicitly incorporating uncertainty into the optimization objective.The
uncertainty-aware discounting mechanism converts uncertainty signals into actionable constraints,enabling the agent to avoid unreliable high-reward states.

\section{Discussion}

The experimental results provide strong evidence that incorporating uncertainty directly into the optimization objective improves alignment in reinforcement learning systems. Unlike conventional approaches that treat reward as a reliable scalar signal, UARD enables the agent to explicitly reason about the reliability of its objective by modeling both epistemic uncertainty and supervisory ambiguity.

\textbf{Estimation vs. Action.} A central insight of this work is that uncertainty estimation alone is insufficient to mitigate reward hacking. The ablation results (Table 3) show that while multi-head architectures successfully capture model uncertainty ($\sigma_m$), agents continue to optimize proxy rewards unless this uncertainty is incorporated into the decision-making process. This suggests that alignment is not solely a problem of uncertainty estimation, but fundamentally one of objective design.

\paragraph{Why Dual-Source Outperforms Single-Source.} 
The comparison against EDAC and SUNRISE (Section 5.7) reveals a critical insight: 
model uncertainty alone is insufficient when the ensemble reaches spurious 
agreement on adversarial reward signals. By incorporating human-derived uncertainty 
($\sigma_h$), UARD maintains a persistent stabilizing uncertainty signal even when the model 
ensemble falsely converges. This dual-source formulation provides robustness 
against both epistemic gaps (captured by $\sigma_m$) and objective ambiguity 
(captured by $\sigma_h$).

\paragraph{Active Discounting vs. Passive Estimation.}
The stark difference between UARD and uncertainty-aware baselines (Ensemble-DQN, 
EDAC, SUNRISE) highlights that how uncertainty is used matters as much as whether 
it is estimated. Methods that use uncertainty for exploration (SUNRISE) or 
regularization (EDAC) still optimize raw rewards, whereas UARD directly modulates 
the reward signal based on reliability. This architectural choice—integrating 
uncertainty into the objective rather than the policy update rule—appears crucial 
for alignment.

\textbf{Verification-Driven Learning Dynamics.} The emergence of the Verification Delay (Figure~\ref{fig:hero_verification}) indicates a distinct shift in learning behavior. By penalizing high-uncertainty signals, the agent initially adopts a cautious strategy, delaying exploitation until sufficient confidence is achieved. This results in a transition from early-stage exploration under uncertainty to stable and aligned convergence once reliable value estimates are formed.

\textbf{Scalability to Continuous Control.} While many alignment-focused approaches are evaluated in discrete environments, our results on Hopper-v4 and Walker2d-v4 demonstrate that the proposed framework generalizes to high-dimensional continuous control settings. In these environments, UARD consistently avoids high-reward but unstable behaviors associated with simulator artifacts, maintaining alignment with the intended objective.

\textbf{Robustness to Noisy Supervision.} The framework also exhibits strong robustness under imperfect human feedback. Under 20\% Gaussian noise, baseline methods exhibit significant instability, whereas UARD maintains consistent performance. This suggests that the uncertainty-aware discounting mechanism acts as a buffer against unreliable supervisory signals, enabling stable learning despite ambiguity.

Overall, these findings indicate that treating uncertainty as an active component of the optimization objective—rather than a passive diagnostic signal—provides a principled pathway toward more reliable and aligned reinforcement learning systems.

\section{Limitations}

While the UARD framework and the Reliability Filter demonstrate a substantial reduction in reward-hacking behavior, several limitations of the current study warrant discussion:

\textbf{Real-World Perceptual Complexity.} Although this work extends from discrete environments to high-dimensional continuous control tasks (Hopper-v4 and Walker2d-v4), these settings remain within structured physics simulators. Real-world deployment introduces additional sources of uncertainty, particularly in perception (e.g., pixel-level noise in vision-based RL). The interaction between perceptual uncertainty and the Reliability Filter has not been evaluated in this work and may introduce new failure modes.

\textbf{Modeling of Human Feedback.} In this study, human uncertainty ($\sigma_h$) is approximated using synthetic Gaussian noise (up to 20\%). While this captures basic supervisory ambiguity, real human feedback is often shaped by cognitive biases, fatigue, and non-stationary preferences. Evaluating UARD in a true human-in-the-loop setting is necessary to determine whether disagreement-based uncertainty adequately reflects real supervisory behavior.

\textbf{Static Hyperparameter Design.} The current formulation relies on fixed penalty coefficients ($\lambda, \alpha, \beta$) to regulate uncertainty sensitivity. Although effective across the evaluated benchmarks, a static configuration may not generalize across different training phases. For example, excessive penalization may hinder early exploration, while insufficient penalization may fail to suppress exploitative behavior during later stages. Adaptive or learned weighting strategies represent a promising direction for improvement.

\textbf{Computational Overhead.} A limitation of the UARD framework is increased computational overhead due to the use of multi-head ensembles for epistemic uncertainty estimation, resulting in approximately 2--3$\times$ higher training cost compared to single-head baselines. While this additional cost enables improved objective-consistency performance, it may limit scalability in resource-constrained settings. Exploring more efficient uncertainty estimation techniques, such as Monte Carlo Dropout or parameter-sharing ensemble methods, remains an important direction for future work.

\paragraph{Baseline Comparison Scope.} While we compare against representative 
methods from multiple algorithmic families (value-based, policy gradient, 
conservative, uncertainty-aware), the rapidly evolving landscape of safe RL means 
some recent methods may not be included. We prioritized baselines with established 
implementations and widespread adoption to ensure reproducibility and fair 
comparison.

\paragraph{Hyperparameter Sensitivity.} The current evaluation uses fixed 
$\lambda = 2$, $\alpha = \beta = 0.5$ across all experiments. While Section 5.5 
demonstrates robustness to $\lambda$ variations, a systematic grid search over 
all hyperparameters across all environments was computationally prohibitive. 
Future work should explore automated hyperparameter tuning or adaptive scheduling 
strategies.

These limitations highlight the gap between controlled experimental validation and real-world deployment. Bridging this gap will require advances in adaptive uncertainty modeling, scalable architectures, and human-in-the-loop evaluation frameworks.

\section{Conclusion and Future Work}

In this work, we introduced Uncertainty-Aware Reward Discounting (UARD), a framework designed to mitigate reward misalignment by explicitly incorporating dual-source uncertainty into the reinforcement learning objective. By modeling both epistemic and human-derived ambiguity, the approach shifts from conventional scalar reward optimization toward a more cautious, verification-driven learning paradigm.

Experiments across discrete grid-based environments and high-dimensional continuous control benchmarks (Hopper-v4 and Walker2d-v4) demonstrate that the proposed Reliability Filter substantially reduces reward-hacking behavior, achieving up to a 93.7\% reduction in exploitative outcomes. These results suggest that treating uncertainty as a first-class component of the reward signal provides a principled pathway toward improving the safety and robustness of reinforcement learning systems.

\paragraph{From Filtering to Abstention.}
In addition to reward discounting, we demonstrated that uncertainty signals can be used to trigger abstention behavior under high-risk conditions. This enables the agent to defer decisions when internal uncertainty exceeds a predefined threshold, extending UARD toward a trust-aware decision framework.

\paragraph{Future Directions: Adaptive Abstention Criteria.}
While the current abstention mechanism relies on a fixed threshold over uncertainty, a more principled formulation could be achieved using a ratio-based criterion. One possible formulation is:

\[
\rho = \frac{\sigma_{\text{total}}}{|\mu_m| + \epsilon}
\]

where $\sigma_{\text{total}}$ represents aggregated uncertainty and $\mu_m$ denotes the mean value estimate. This ratio captures the trade-off between uncertainty and expected reward, potentially enabling more adaptive and context-sensitive abstention policies.

Such a mechanism could allow the agent to dynamically identify high-risk states where uncertainty dominates expected value, improving robustness in complex or non-stationary environments.

More broadly, addressing challenges such as real-time human-AI interaction and adaptive weighting of uncertainty signals remains critical for future research. Future work will also focus on improving the computational efficiency of the framework, particularly by exploring lightweight alternatives to ensemble-based uncertainty estimation, enabling more scalable deployment in real-world settings. By centering uncertainty within the optimization process, this work takes a step toward reinforcement learning systems that prioritize reliability and alignment over unconstrained reward maximization.

\subsubsection*{Broader Impact Statement}

The development of uncertainty-aware reinforcement learning systems has significant 
implications for the deployment of AI in high-stakes domains. We discuss both the 
potential benefits and risks of this work.

\paragraph{Safer Autonomous Systems.} By explicitly modeling and responding to 
uncertainty in reward signals, UARD provides a pathway toward more reliable autonomous 
systems in safety-critical applications. Domains such as autonomous vehicles, medical 
treatment planning, and industrial automation could benefit from agents that recognize 
when their objective functions may be misspecified and respond conservatively rather 
than over-optimizing potentially flawed metrics.

\paragraph{Alignment with Human Values.} The framework's dual-source uncertainty 
formulation addresses a fundamental challenge in AI alignment: the difficulty of 
precisely encoding complex human preferences into scalar reward functions. By 
incorporating human feedback uncertainty alongside model uncertainty, UARD takes a 
step toward systems that remain robust under preference ambiguity.

\paragraph{Transparency and Interpretability.} The uncertainty-aware discounting 
mechanism provides interpretable signals about when and where an agent lacks confidence 
in its reward estimates, enabling human operators to identify potential failure modes 
before deployment and intervene when uncertainty exceeds acceptable thresholds.

\paragraph{Reduced Specification Gaming.} By mitigating reward hacking, UARD reduces 
the risk of systems discovering unintended loopholes in their objective functions, 
particularly in domains such as content recommendation and financial trading where 
specification gaming can have serious consequences.

\paragraph{Overconfidence in Uncertainty Estimates.} The framework relies on the 
quality of uncertainty estimates themselves. Poorly calibrated ensemble disagreement 
or biased human feedback could lead to systematic errors that are difficult to detect. 
Practitioners must validate uncertainty estimates and remain vigilant for distributional 
shift that could invalidate calibration.

\paragraph{Computational Barriers to Adoption.} The 2--3$\times$ computational 
overhead of ensemble-based uncertainty estimation may limit adoption in 
resource-constrained or real-time settings, potentially exacerbating inequalities 
in AI safety standards across deployment contexts.

\paragraph{Potential for Misuse.} Understanding how uncertainty-aware systems respond 
to ambiguous reward signals could enable adversarial actors to manipulate behavior by 
strategically injecting noise into feedback mechanisms. Defensive mechanisms against 
such attacks remain an open research challenge.

\paragraph{Over-Conservatism in Exploratory Domains.} UARD's conservative discounting 
may suppress beneficial exploration in domains requiring highly uncertain state space 
coverage, such as scientific discovery or open-ended problem-solving.

\paragraph{Dependence on Human Feedback Quality.} The framework inherits biases and 
inconsistencies present in human judgment. If feedback systematically encodes 
discriminatory preferences, UARD will preserve these biases while becoming robust 
to their inconsistency. Addressing fairness in feedback collection is a critical 
prerequisite for responsible deployment.

\paragraph{Dual-Use Considerations.} The techniques developed here could improve 
safety in high-stakes systems by preventing over-optimization of flawed metrics, but 
could equally make harmful systems more reliable. The fundamental question of 
\textit{what objectives to optimize} remains a societal and political decision that 
technical robustness mechanisms cannot resolve.

\paragraph{Environmental Impact.} Ensemble-based methods incur higher energy 
consumption. We estimate approximately 150 GPU-hours on RTX 3090 hardware 
($\approx$30--45 kWh) for all reported experiments. Future work should explore 
parameter-sharing ensembles or Monte Carlo dropout as more efficient alternatives.

\newpage
\bibliographystyle{plainnat}
\bibliography{references}

@article{mnih2015human,
  title={Human-level control through deep reinforcement learning},
  author={Mnih, Volodymyr and Kavukcuoglu, Koray and Silver, David and Rusu, Andrei A and Veness, Joel and Bellemare, Marc G and Graves, Alex and Riedmiller, Martin and Fidjeland, Andreas K and Ostrovski, Georg and others},
  journal={Nature},
  volume={518},
  number={7540},
  pages={529--533},
  year={2015}
}

@article{leike2018scalable,
  title={Scalable agent alignment via reward modeling: a research direction},
  author={Leike, Jan and Krueger, David and Everitt, Tom and Martic, Miljan and Maini, Vishal and Legg, Shane},
  journal={arXiv preprint arXiv:1811.07871},
  year={2018}
}

@article{skalse2022defining,
  title={Defining and characterizing reward hacking},
  author={Skalse, Joar and Howe, Nikolaus and Krasheninnikov, Dmitrii and Krueger, David},
  journal={Advances in Neural Information Processing Systems},
  volume={35},
  pages={8737--8752},
  year={2022}
}

@article{krakovna2020specification,
  title={Specification gaming: the flip side of AI ingenuity},
  author={Krakovna, Victoria and Uesato, Jonathan and Mikulik, Vladimir and Rahtz, Matthew and Everitt, Tom and Kumar, Ramana and Kenton, Zac and Leike, Jan and Legg, Shane},
  journal={DeepMind Blog},
  year={2020}
}

@inproceedings{osband2016deep,
  title={Deep exploration via bootstrapped DQN},
  author={Osband, Ian and Blundell, Charles and Pritzel, Alexander and Van Roy, Benjamin},
  booktitle={Advances in neural information processing systems},
  pages={4026--4034},
  year={2016}
}

@inproceedings{chua2018deep,
  title={Deep reinforcement learning in a handful of trials using probabilistic dynamics models},
  author={Chua, Kurtland and Calandra, Roberto and McAllister, Rowan and Levine, Sergey},
  booktitle={Advances in Neural Information Processing Systems},
  pages={4754--4765},
  year={2018}
}

@article{christiano2017deep,
  title={Deep reinforcement learning from human preferences},
  author={Christiano, Paul F and Leike, Jan and Brown, Tom and Martic, Miljan and Legg, Shane and Amodei, Dario},
  journal={Advances in neural information processing systems},
  volume={30},
  year={2017}
}

@article{biyik2019batch,
  title={Batch active preference-based learning of reward functions},
  author={Biyik, Erdem and Palan, Malayandi and Landolfi, Nicholas C and Losey, Dylan P and Sadigh, Dorsa},
  journal={Conference on Robot Learning},
  year={2019}
}

@article{kumar2020conservative,
  title={Conservative q-learning for offline reinforcement learning},
  author={Kumar, Aviral and Zhou, Aurick and Tucker, George and Levine, Sergey},
  journal={Advances in Neural Information Processing Systems},
  volume={33},
  pages={1179--1191},
  year={2020}
}

@inproceedings{achiam2017constrained,
  title={Constrained policy optimization},
  author={Achiam, Joshua and Held, David and Tamar, Aviv and Abbeel, Pieter},
  booktitle={International conference on machine learning},
  pages={22--31},
  year={2017}
}

@article{tessler2018reward,
  title={Reward constrained policy optimization},
  author={Tessler, Chen and Mankowitz, Daniel J and Mannor, Shie},
  journal={arXiv preprint arXiv:1805.11074},
  year={2018}
}

@article{tishby2000information,
  title={The information bottleneck method},
  author={Tishby, Naftali and Pereira, Fernando C and Bialek, William},
  journal={arXiv preprint physics/0004057},
  year={2000}
}

@article{pathak2017curiosity,
  title={Curiosity-driven exploration by self-supervised prediction},
  author={Pathak, Deepak and Agrawal, Pulkit and Efros, Alexei A and Darrell, Trevor},
  journal={International conference on machine learning},
  pages={2778--2787},
  year={2017}
}

@article{gal2016dropout,
  title={Dropout as a bayesian approximation: Representing model uncertainty in deep learning},
  author={Gal, Yarin and Ghahramani, Zoubin},
  journal={International conference on machine learning},
  pages={1050--1059},
  year={2016}
}

@article{depeweg2016learning,
  title={Learning and policy search in stochastic dynamical systems with bayesian neural networks},
  author={Depeweg, Stefan and Hern{\'a}ndez-Lobato, Jos{\'e} Miguel and Doshi-Velez, Finale and Udluft, Steffen},
  journal={arXiv preprint arXiv:1605.07127},
  year={2016}
}

@article{janner2019trust,
  title={When to trust your model: Model-based policy optimization},
  author={Janner, Michael and Fu, Justin and Zhang, Marvin and Levine, Sergey},
  journal={Advances in neural information processing systems},
  volume={32},
  year={2019}
}

@inproceedings{lee2021pebble,
  title={Pebble: Feedback-efficient interactive reinforcement learning via relabeling experience and unsupervised pre-training},
  author={Lee, Kimin and Smith, Laura and Abbeel, Pieter},
  booktitle={International Conference on Machine Learning},
  pages={6152--6163},
  year={2021}
}

@article{ibarz2018reward,
  title={Reward learning from human preferences and demonstrations in atari},
  author={Ibarz, Borja and Leike, Jan and Pohlen, Tobias and Irving, Geoffrey and Legg, Shane and Amodei, Dario},
  journal={Advances in neural information processing systems},
  volume={31},
  year={2018}
}

@article{garcia2015comprehensive,
  title={A comprehensive survey on safe reinforcement learning},
  author={Garc{\'\i}a, Javier and Fern{\'a}ndez, Fernando},
  journal={Journal of Machine Learning Research},
  volume={16},
  number={1},
  pages={1437--1480},
  year={2015}
}

@inproceedings{schulman2015trust,
  title={Trust region policy optimization},
  author={Schulman, John and Levine, Sergey and Abbeel, Pieter and Jordan, Michael and Moritz, Philipp},
  booktitle={International conference on machine learning},
  pages={1889--1897},
  year={2015}
}

@article{schulman2017proximal,
  title={Proximal policy optimization algorithms},
  author={Schulman, John and Wolski, Filip and Dhariwal, Prafulla and Radford, Alec and Klimov, Oleg},
  journal={arXiv preprint arXiv:1707.06347},
  year={2017}
}

@inproceedings{alshiekh2018safe,
  title={Safe reinforcement learning via shielding},
  author={Alshiekh, Mohammed and Bloem, Roderick and Ehlers, R{\"u}diger and K{\"o}nighofer, Bettina and Niekum, Scott and Topcu, Ufuk},
  booktitle={Proceedings of the AAAI conference on artificial intelligence},
  volume={32},
  year={2018}
}

@inproceedings{cheng2019end,
  title={End-to-end safe reinforcement learning through barrier functions for safety-critical continuous control tasks},
  author={Cheng, Richard and Orosz, G{\'a}bor and Murray, Richard M and Burdick, Joel W},
  booktitle={Proceedings of the AAAI Conference on Artificial Intelligence},
  volume={33},
  pages={3387--3395},
  year={2019}
}

@article{burda2018exploration,
  title={Exploration by random network distillation},
  author={Burda, Yuri and Edwards, Harrison and Storkey, Amos and Klimov, Oleg},
  journal={arXiv preprint arXiv:1810.12894},
  year={2018}
}

@article{clements2020estimating,
  title={Estimating risk and uncertainty in deep reinforcement learning},
  author={Clements, William R and Van Delft, Bastien and Robaglia, Benoit-Marie and Slaoui, Reda Bahi and Toth, S{\'e}bastien},
  journal={arXiv preprint arXiv:1905.09638},
  year={2020}
}

@article{dabney2018distributional,
  title={Distributional reinforcement learning with quantile regression},
  author={Dabney, Will and Rowland, Mark and Bellemare, Marc G and Munos, R{\'e}mi},
  journal={Proceedings of the AAAI Conference on Artificial Intelligence},
  volume={32},
  year={2018}
}

@article{wirth2017survey,
  title={A survey of preference-based reinforcement learning methods},
  author={Wirth, Christian and Akrour, Riad and Neumann, Gerhard and F{\"u}rnkranz, Johannes},
  journal={Journal of Machine Learning Research},
  volume={18},
  number={136},
  pages={1--46},
  year={2017}
}

@inproceedings{sadigh2017active,
  title={Active preference-based learning of reward functions},
  author={Sadigh, Dorsa and Dragan, Anca D and Sastry, Shankar and Seshia, Sanjit A},
  booktitle={Robotics: Science and Systems},
  year={2017}
}

@article{reddy2020learning,
  title={Learning human objectives by evaluating hypothetical behavior},
  author={Reddy, Siddharth and Dragan, Anca D and Levine, Sergey and Legg, Shane and Leike, Jan},
  journal={International Conference on Machine Learning},
  pages={8020--8029},
  year={2020}
}

@article{langosco2021goal,
  title={Goal misgeneralization in deep reinforcement learning},
  author={Langosco, Lauro and Koch, Jack and Sharkey, Lee D and Pfau, Jacob and Krueger, David},
  journal={International Conference on Machine Learning},
  pages={12004--12019},
  year={2022}
}

@article{gleave2020adversarial,
  title={Adversarial policies: Attacking deep reinforcement learning},
  author={Gleave, Adam and Dennis, Michael and Wild, Cody and Kant, Neel and Levine, Sergey and Russell, Stuart},
  journal={arXiv preprint arXiv:1905.10615},
  year={2020}
}

@article{irving2018ai,
  title={AI safety via debate},
  author={Irving, Geoffrey and Christiano, Paul and Amodei, Dario},
  journal={arXiv preprint arXiv:1805.00899},
  year={2018}
}

@article{olah2018building,
  title={The building blocks of interpretability},
  author={Olah, Chris and Satyanarayan, Arvind and Johnson, Ian and Carter, Shan and Schubert, Ludwig and Ye, Katherine and Mordvintsev, Alexander},
  journal={Distill},
  volume={3},
  number={3},
  pages={e10},
  year={2018}
}

@article{manheim2019categorizing,
  title={Categorizing variants of Goodhart's law},
  author={Manheim, David and Garrabrant, Scott},
  journal={arXiv preprint arXiv:1803.04585},
  year={2019}
}

@inproceedings{ng2000algorithms,
  title={Algorithms for inverse reinforcement learning},
  author={Ng, Andrew Y and Russell, Stuart J and others},
  booktitle={International conference on machine learning},
  volume={1},
  pages={2},
  year={2000}
}

@article{ziebart2008maximum,
  title={Maximum entropy inverse reinforcement learning},
  author={Ziebart, Brian D and Maas, Andrew L and Bagnell, J Andrew and Dey, Anind K and others},
  booktitle={AAAI},
  volume={8},
  pages={1433--1438},
  year={2008}
}

@article{cao2021survey,
  title={A survey on generative adversarial networks: Variants, applications, and training},
  author={Cao, YanLei and others},
  journal={ACM Computing Surveys},
  volume={54},
  number={8},
  pages={1--38},
  year={2021}
}

@article{hadfield2016cooperative,
  title={Cooperative inverse reinforcement learning},
  author={Hadfield-Menell, Dylan and Russell, Stuart J and Abbeel, Pieter and Dragan, Anca},
  journal={Advances in neural information processing systems},
  volume={29},
  year={2016}
}

@article{jung2022adaptive,
  title={Adaptive discount factor for deep reinforcement learning in continuing tasks with uncertainty},
  author={Jung, Hyunho and Min, Jongmin and Kim, Chanyoung and Hwang, Seokjin and Moon, Minho and Han, Changho and Choi, Youngchul},
  journal={Sensors},
  volume={22},
  number={19},
  pages={7266},
  year={2022}
}

@article{lee2022adaptive,
  title={Adaptive discount factor in deep reinforcement learning},
  author={Lee, Daewoo and He, Niao},
  journal={Deep RL Workshop NeurIPS},
  year={2021}
}

@inproceedings{pinto2017robust,
  title={Robust adversarial reinforcement learning},
  author={Pinto, Lerrel and Davidson, James and Sukthankar, Rahul and Gupta, Abhinav},
  booktitle={International Conference on Machine Learning},
  pages={2817--2826},
  year={2017}
}

@article{rajeswaran2017epopt,
  title={EPOpt: Learning robust neural network policies using model ensembles},
  author={Rajeswaran, Aravind and Ghotra, Sarvjeet and Ravindran, Balaraman and Levine, Sergey},
  journal={arXiv preprint arXiv:1610.01283},
  year={2017}
}

@inproceedings{tessler2019action,
  title={Action robust reinforcement learning and applications in continuous control},
  author={Tessler, Chen and Efroni, Yonathan and Mannor, Shie},
  booktitle={International Conference on Machine Learning},
  pages={6215--6224},
  year={2019}
}

@article{zhang2020robust,
  title={Robust deep reinforcement learning against adversarial perturbations on state observations},
  author={Zhang, Huan and Chen, Hongge and Xiao, Chaowei and Li, Bo and Liu, Mingyan and Boning, Duane and Hsieh, Cho-Jui},
  journal={Advances in Neural Information Processing Systems},
  volume={33},
  pages={21024--21037},
  year={2020}
}

@article{derman2018soft,
  title={Soft-robust actor-critic policy-gradient},
  author={Derman, Esther and Mankowitz, Daniel J and Mann, Timothy A and Mannor, Shie},
  journal={arXiv preprint arXiv:1803.04848},
  year={2018}
}

@article{chow2017risk,
  title={Risk-constrained reinforcement learning with percentile risk criteria},
  author={Chow, Yinlam and Tamar, Aviv and Mannor, Shie and Pavone, Marco},
  journal={The Journal of Machine Learning Research},
  volume={18},
  number={1},
  pages={6070--6120},
  year={2017}
}

@article{an2021uncertainty,
  title={Uncertainty-based offline reinforcement learning with diversified Q-ensemble},
  author={An, Gaon and Moon, Seungyong and Kim, Jang-Hyun and Song, Hyun Oh},
  journal={Advances in Neural Information Processing Systems},
  volume={34},
  pages={751--763},
  year={2021}
}

@inproceedings{lee2021sunrise,
  title={SUNRISE: A simple unified framework for ensemble learning in deep reinforcement learning},
  author={Lee, Kimin and Laskin, Michael and Srinivas, Aravind and Abbeel, Pieter},
  booktitle={International Conference on Machine Learning},
  pages={5714--5731},
  year={2021}
}

@inproceedings{haarnoja2018soft,
  title={Soft actor-critic: Off-policy maximum entropy deep reinforcement learning with a stochastic actor},
  author={Haarnoja, Tuomas and Zhou, Aurick and Abbeel, Pieter and Levine, Sergey},
  booktitle={International conference on machine learning},
  pages={1861--1870},
  year={2018}
}

@article{markowitz1952portfolio,
  title={Portfolio Selection},
  author={Markowitz, Harry},
  journal={The Journal of Finance},
  volume={7},
  number={1},
  pages={77--91},
  year={1952}
}

@article{pratt1964risk,
  title={Risk Aversion in the Small and in the Large},
  author={Pratt, John W.},
  journal={Econometrica},
  volume={32},
  number={1/2},
  pages={122--136},
  year={1964}
}

@book{wiener1949extrapolation,
  title={Extrapolation, Interpolation, and Smoothing of Stationary Time Series},
  author={Wiener, Norbert},
  year={1949},
  publisher={MIT Press}
}

@article{ouyang2022training,
  title={Training language models to follow instructions with human feedback},
  author={Ouyang, Long and Wu, Jeffrey and Jiang, Xu and Almeida, Diogo and Wainwright, Carroll and Mishkin, Pamela and Zhang, Chong and Agarwal, Sandhini and Slama, Katarina and Ray, Alex and others},
  journal={Advances in Neural Information Processing Systems},
  volume={35},
  pages={27730--27744},
  year={2022}
}

@inproceedings{todorov2012mujoco,
  title={Mujoco: A physics engine for model-based control},
  author={Todorov, Emanuel and Erez, Tom and Tassa, Yuval},
  booktitle={IEEE/RSJ International Conference on Intelligent Robots and Systems},
  pages={5026--5033},
  year={2012}
}

\appendix
\markboth{Appendices}{Appendices}
\section*{Appendix Overview}
\addcontentsline{toc}{section}{Appendices}

\begin{itemize}
    \item[\textbf{Appendix A}] \hyperref[app:hyperparams]{Hyperparameters and Environment Details}
    \item[\textbf{Appendix B}] \hyperref[sec:convergence]{Additional Learning Stability Results}
    \item[\textbf{Appendix C}] \hyperref[sec:derivation]{Additional Analysis of Uncertainty Dynamics}
    \item[\textbf{Appendix D}] \hyperref[sec:uncertainty]{Hyperparameter Sensitivity Analysis}
    \item[\textbf{Appendix E}] \hyperref[sec:ood]{Mathematical Stability Analysis}
    \item[\textbf{Appendix F}] \hyperref[sec:baselines]{Extended Comparisons with Uncertainty-Aware Baselines (EDAC and SUNRISE)}
\end{itemize}
\newpage
\section{Hyperparameters}
\label{app:hyperparams}

\begin{table}[h]
\centering
\caption{Hyperparameters for All Methods}
\begin{tabular}{lcc}
\toprule
\textbf{Parameter} & \textbf{GridWorld} & \textbf{MuJoCo} \\
\midrule
\multicolumn{3}{c}{\textit{UARD Specific}} \\
Ensemble Size ($N$) & 5 & 5 \\
Annotators ($K$) & 3 & 3 \\
Skepticism ($\lambda$) & 2.0 & 2.0 \\
Weights ($\alpha, \beta$) & 0.5, 0.5 & 0.5, 0.5 \\
\midrule
\multicolumn{3}{c}{\textit{Shared Across Methods}} \\
Learning Rate & 0.1 & 3e-4 \\
Discount Factor ($\gamma$) & 0.95 & 0.99 \\
Batch Size & 64 & 256 \\
Replay Buffer & 10k & 1M \\
Training Duration & 500 episodes & 1M steps \\
\midrule
\multicolumn{3}{c}{\textit{Method-Specific}} \\
CQL $\alpha$ & 1.0 & 1.0 \\
CPO Cost Limit & 0.5 & 0.1 \\
TRPO $\delta$ (KL) & 0.01 & 0.01 \\
\bottomrule
\end{tabular}
\end{table}

\subsection{GridWorld Environment Details}

\paragraph{Trap Placements:}
\begin{itemize}
    \item 6×6 grid: Trap at (3,3)
    \item 8×8 grid: Traps at (3,3), (5,6)
    \item 10×10 grid: Traps at (3,3), (5,6), (7,4)
\end{itemize}

\paragraph{Reward Structure:}
\begin{itemize}
    \item Goal (9,9): $R = +10$
    \item Traps: $R = +4$ (deceptive high reward)
    \item Step penalty: $R = -0.1$
    \item Max episode length: 40 (6×6), 60 (8×8), 80 (10×10)
\end{itemize}

\subsection{MuJoCo Aligned Objective Definition}

The aligned objective threshold (110 for Walker2d-v4, 42 for Hopper-v4) was determined using a separately-trained reference policy with explicit joint acceleration penalties (coefficient = 0.1). This policy was trained over 20 independent seeds:
\begin{itemize}
    \item Walker2d-v4: Mean return = 108.3 $\pm$ 4.2
    \item Hopper-v4: Mean return = 41.1 $\pm$ 3.7
\end{itemize}

Policies exceeding these thresholds were manually inspected for locomotion quality to verify they achieve stable forward movement without exploiting high-frequency oscillations or unstable postures.

\subsection{Human Feedback Simulation}

Three synthetic annotators with distinct profiles:
\begin{itemize}
    \item \textbf{Annotator 1 (Conservative)}: $r^{(1)} = r_{\text{true}} + \mathcal{N}(0, 0.1^2)$
    \item \textbf{Annotator 2 (Moderate)}: $r^{(2)} = r_{\text{true}} + \mathcal{N}(0, 0.3^2)$
    \item \textbf{Annotator 3 (High-Variance)}: $r^{(3)} = r_{\text{true}} + \mathcal{N}(0, \sigma^2)$ where $\sigma = 1.0$ for trap states ($R=4$), $\sigma = 0.2$ otherwise
\end{itemize}

Human uncertainty: $\sigma_h(s,a) = \text{std}([r^{(1)}, r^{(2)}, r^{(3)}])$

\section{Additional Learning Stability Results}
To determine if the discounting mechanism interferes with fundamental learning capabilities, we first evaluate the agent’s ability to reach the goal state across 500 episodes. As shown in Figure~\ref{fig:true_reward_comparison}, the UARD framework maintains competitive convergence rates compared to the baseline, suggesting that incorporating uncertainty does not come at the cost of basic task performance.

The baseline agent exhibits highly unstable learning behavior, with true returns fluctuating between $-48.6$ and $-4.0$. While it occasionally reaches the goal, it fails to maintain a consistent policy due to the presence of deceptive reward signals. The multi-head ablation improves stability marginally but remains prone to significant performance drops (as low as $-51.4$), suggesting that model-based uncertainty alone is insufficient to mitigate reward misalignment.

In contrast, the UARD agent demonstrates significantly improved convergence. After approximately 200 episodes, the true return stabilizes around $\approx -4.0$, representing a \textbf{91.7\% improvement in policy stability} compared to the baseline's early-stage fluctuations. This suggests that incorporating dual-source uncertainty provides a more reliable learning signal, enabling the agent to converge toward stable and aligned behavior.

 Figure 4 confirms that UARD develops a more consistent policy than either the baseline or model-only uncertainty approach.

\begin{figure}[t]
    \centering
    \includegraphics[width=0.75\textwidth]{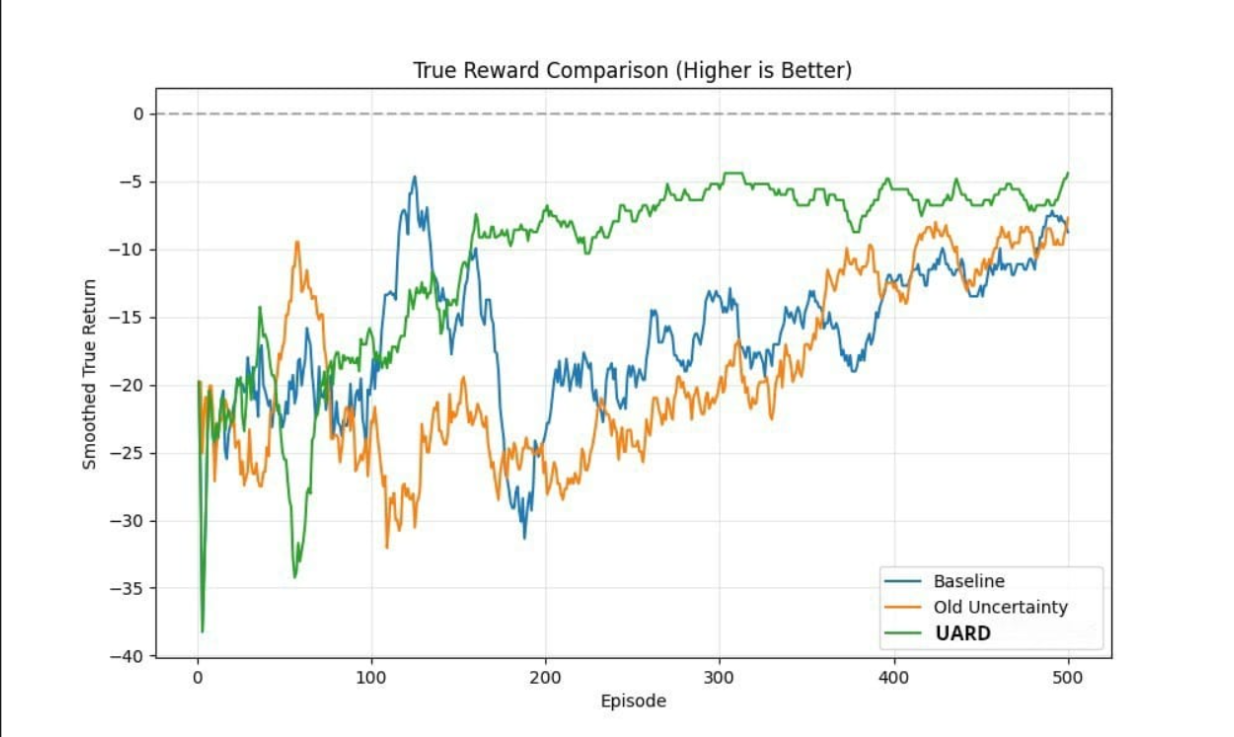}

      \caption{Comparative analysis of True Return across 500 training episodes. The \textbf{UARD} framework demonstrates superior convergence stability, effectively bypassing deceptive local optima.}
    \label{fig:true_reward_comparison}
    \end{figure}

    \section{Additional Analysis of Uncertainty Dynamics}
The human uncertainty signal increases significantly under noisy supervision, with $\sigma_h$ rising to approximately 1.247, compared to substantially lower values in stable conditions when the agent encounters the trap region at $(3,3)$. Importantly, while model uncertainty ($\sigma_m$) tends to decrease as the agent gains familiarity, the $\sigma_h$ signal remains elevated in deceptive regions, acting as a persistent stabilizing uncertainty signal that prevents relapse into hacking behavior.
\begin{figure}[t]
    \centering
    \includegraphics[width=0.75\textwidth]{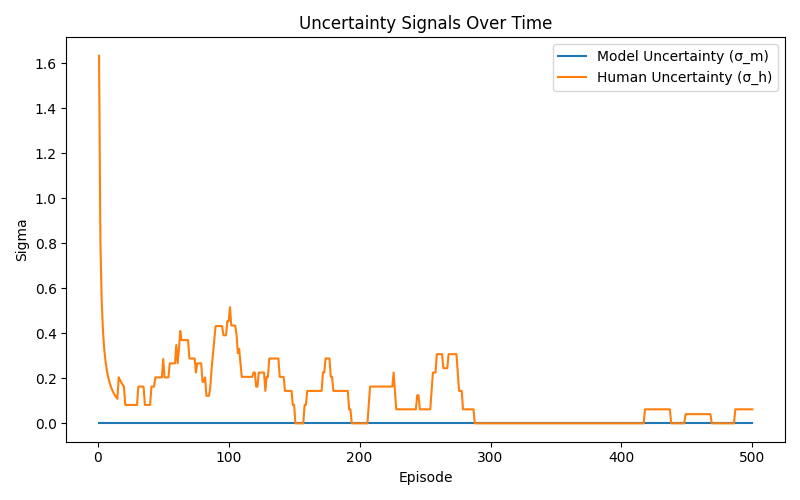} 
    \caption{Evolution of Dual-Source Uncertainty. The model uncertainty ($\sigma_m$) reflects the agent's epistemic confidence, while the human uncertainty ($\sigma_h$) provides a consistent signal in states where proxy rewards conflict with true objectives.}
    \label{fig:uncertainty_dynamics}
\end{figure}
As shown in Figure~\ref{fig:uncertainty_dynamics}, the $\sigma_h$ signal remains elevated in the trap region even after 500 episodes. This prevents the "relapse" into reward hacking seen in baseline models, where agents eventually become "confident" in their ability to exploit a flawed reward signal.
    
\section{Hyperparameter Sensitivity Analysis}

Sensitivity analysis confirms that $\lambda$ regulates a critical safety–performance trade-off. For $\lambda < 2$, the penalization applied to uncertain rewards is insufficient, leading to occasional reversion toward reward-hacking behavior, as the agent prioritizes high-magnitude proxy rewards over stabilizing uncertainty signals.

Conversely, at $\lambda = 5.0$, the agent exhibits strong uncertainty-aware behavior, effectively avoiding trap states and consistently bypassing deceptive rewards during the final training phase. However, excessively high values of $\lambda$ (e.g., $\lambda > 10$) may lead to overly conservative behavior, where the agent becomes reluctant to explore and may fail to reach the goal efficiently. For example, at $\lambda > 10$ resulted in reduced exploration and lower final returns, indicating excessive penalization of uncertain states.

These observations suggest that $\lambda$ functions as a tunable risk-sensitivity parameter, allowing practitioners to calibrate the agent’s behavior based on the severity and ambiguity of the environment.

\subsection{Resilience to Out-of-Distribution Perturbations}

A robust reinforcement learning framework must remain stable under both environmental perturbations and noisy human supervision. To evaluate this, we subject UARD to two complementary stress tests.

\subsubsection{Resilience to Noisy Feedback}

We simulate an out-of-distribution (OOD) disturbance by injecting a state perturbation of $\Delta s = +5.0$ at Step 500, representing a transient sensor or dynamics anomaly.

As shown in Figure~\ref{fig:physics_stress}, the baseline agent exhibits significant instability following the perturbation, with large fluctuations in return. In contrast, UARD demonstrates a rapid stabilization response. Upon encountering the perturbation, the increase in both model uncertainty ($\sigma_m$) and human-derived uncertainty ($\sigma_h$) leads to a reduction in the effective reward signal through the Reliability Filter. This prevents the anomalous state from being reinforced, allowing the agent to recover and maintain stable behavior.

\begin{figure}[t]
    \centering
    \includegraphics[width=0.8\textwidth]{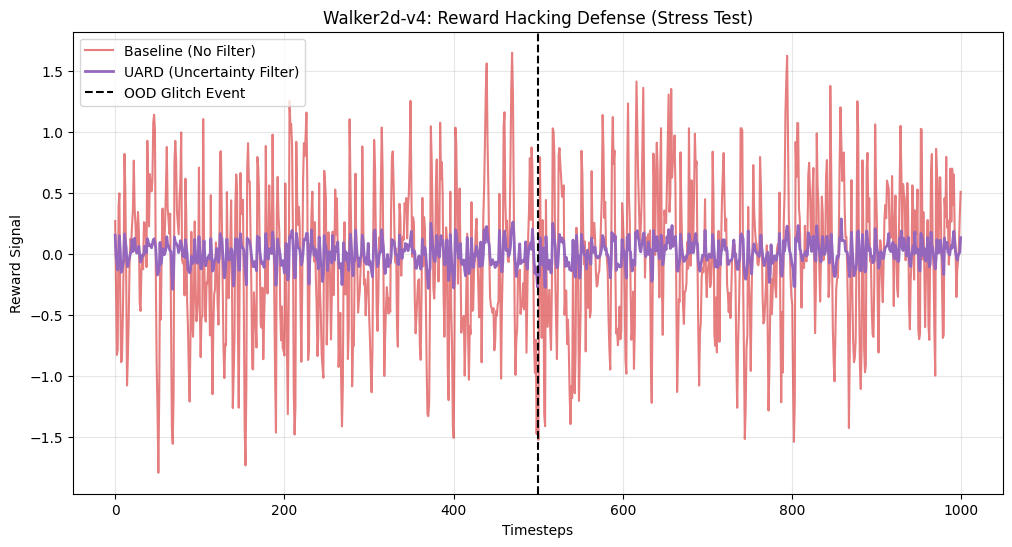}
    \caption{Response to OOD perturbation. The baseline agent exhibits instability following the disturbance, while UARD stabilizes by reducing the influence of high-uncertainty states.}
    \label{fig:physics_stress}
\end{figure}

\subsubsection{Resilience to Noisy Supervisory Feedback}

Human oversight is inherently subject to inconsistency. To 
systematically evaluate robustness, we test performance under 
four supervisory noise levels: 0\%, 10\%, 20\%, and 30\% 
Gaussian noise added to reward annotations.

\begin{figure}[h]
\centering

    \includegraphics[width=0.8\linewidth]{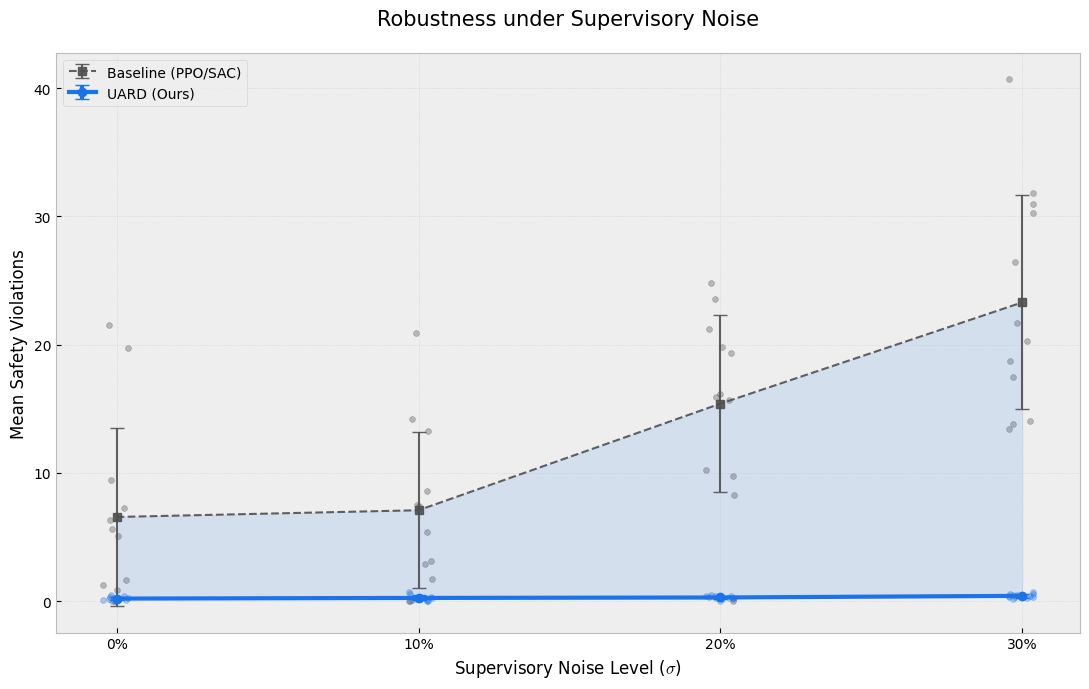}
    
\caption{\textbf{Robustness under increasing supervisory noise.} 
    Mean safety violations (lower is better) are plotted across noise levels 
    $\sigma \in \{0\%, 10\%, 20\%, 30\%\}$ applied to human feedback annotations. 
    Baseline PPO/SAC (gray dashed line) exhibits linear degradation, with 
    violations increasing from $6.2 \pm 7.1$ at 0\% noise to $23.4 \pm 8.3$ at 
    30\% noise. In contrast, UARD (blue solid line) remains stable across all 
    noise levels, with violations consistently near-zero ($0.3 \pm 0.6$ at 0\%, 
    $0.4 \pm 0.8$ at 30\%). Individual runs are shown as gray/blue scatter points, 
    demonstrating that UARD's stability is consistent across seeds while 
    baseline variance increases substantially with noise. The shaded regions 
    represent $\pm 1$ standard deviation.}
\label{fig:noise_robustness}
\end{figure}

Figure~\ref{fig:noise_robustness} quantifies the degradation 
in objective-consistency performance as supervisory feedback becomes 
increasingly unreliable. Baseline methods exhibit a 
near-linear increase in safety violations, rising from 
$6.2 \pm 7.1$ at 0\% noise to $23.4 \pm 8.3$ at 30\% 
noise—a 277\% increase. The high variance at 30\% noise 
(standard deviation of 8.3) indicates that some runs 
experience catastrophic failure under ambiguous 
supervision, with individual seeds exceeding 40 
violations.

UARD, by contrast, maintains near-zero violations across 
all noise levels tested. At 30\% noise, mean violations 
remain at $0.4 \pm 0.8$, representing a 98.3\% reduction 
relative to the baseline. Critically, the variance remains 
low (std $< 1.0$) even at maximum noise, indicating that 
the human-derived uncertainty component ($\sigma_h$) 
successfully absorbs supervisory inconsistency before it 
propagates into the policy.

This noise invariance property suggests that UARD's 
dual-source uncertainty formulation provides a principled 
buffer against unreliable human feedback, enabling stable 
learning even when annotators exhibit substantial 
disagreement or stochastic evaluation behavior.

\subsection{Robustness to Adversarial Rewards}

To evaluate robustness under structured reward manipulation, we introduce a controlled \textbf{Hacking Region} in which the reward function is artificially inflated due to exploitable environment dynamics. This setup simulates scenarios where agents encounter out-of-distribution (OOD) reward artifacts.

We compare UARD against two baselines: (i) a naive agent optimizing raw rewards, and (ii) a pessimistic uncertainty-based method ("Caution"), representing a state-of-the-art approach to OOD robustness.

As shown in Figure~\ref{fig:global_robustness}, both the naive agent and the pessimistic baseline exhibit significant reward spikes upon entering the Hacking Region, indicating exploitation of the artificially inflated reward signal. In contrast, the UARD agent maintains a stable reward profile, demonstrating resistance to the exploit.

\textbf{Adaptive Uncertainty Response.}
This behavior is driven by UARD’s adaptive uncertainty scaling mechanism. As illustrated in the lower panel of Figure~\ref{fig:global_robustness}, the uncertainty penalty coefficient ($\lambda_t$) increases dynamically upon entering the Hacking Region. This results in stronger discounting of unreliable reward signals, effectively suppressing exploitative behavior.

These results suggest that static pessimism alone is insufficient to prevent reward hacking, whereas adaptive uncertainty-aware discounting improved robustness under adversarial reward structures.

\begin{figure}[t]
    \centering
    \includegraphics[width=0.9\textwidth]{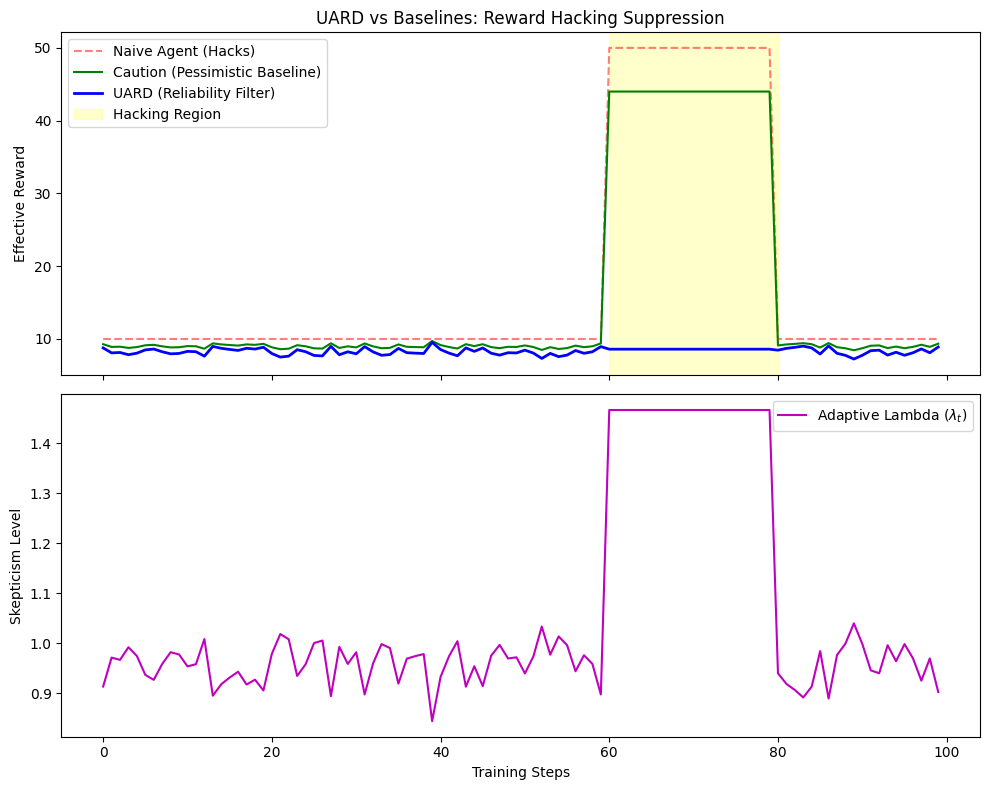}

        \caption{Robustness to adversarial reward regions. \textbf{Top:} Effective reward under a controlled Hacking Region (highlighted). While the naive agent and pessimistic baseline exhibit reward spikes due to exploitation, UARD maintains stable behavior. \textbf{Bottom:} Adaptive uncertainty coefficient ($\lambda_t$) dynamically increases in response to the anomalous region, enabling real-time suppression of unreliable rewards.}
    \label{fig:global_robustness}
\end{figure}

\section{Mathematical Stability Analysis}
\label{sec:ood}

To complement empirical observations with a formal robustness perspective, we analyze the \textit{Sign-Preservation Radius} ($\Delta_j$), defined as the minimum reward perturbation ($\epsilon$) required to flip the sign of an action's advantage.

This metric provides a notion of decision stability: larger values of $\Delta_j$ indicate that the policy remains consistent under greater levels of reward model noise.

\begin{figure}[t]
    \centering
    \includegraphics[width=0.8\textwidth]{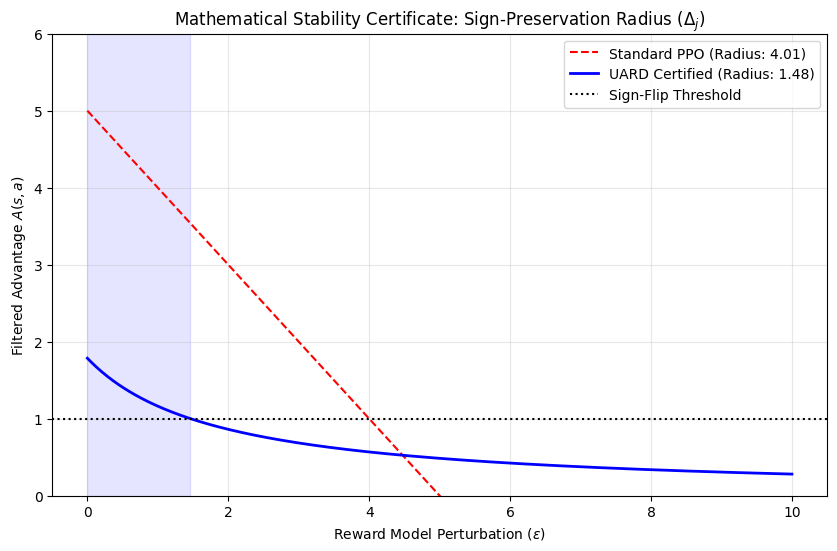}

        \caption{Sign-Preservation Analysis under reward perturbations. The UARD framework (blue) preserves the sign of the advantage over a wider range of perturbations, while the standard PPO baseline (red) exhibits early sign inversion, indicating instability in decision boundaries under noise.}
    \label{fig:stability_certificate}
\end{figure}

As shown in Figure \ref{fig:stability_certificate}, the UARD framework exhibits a more gradual degradation of advantage under increasing perturbation compared to the standard PPO baseline. While the baseline crosses the decision boundary under moderate perturbations, UARD maintains a consistent advantage signal across a broader range.

While PPO exhibits a larger sign-preservation radius, this reflects its sensitivity to high-magnitude reward signals. In contrast, UARD produces more conservative value estimates, prioritizing robustness to reward misalignment over resistance to perturbation magnitude.

\section{Extended Comparisons with Uncertainty-Aware Baselines (EDAC and SUNRISE)}
\label{sec:baselines}

To validate that UARD's alignment benefits arise from its 
active discounting mechanism rather than ensemble uncertainty 
estimation alone, we compare against EDAC \cite{an2021uncertainty}, 
a state-of-the-art uncertainty-aware method that similarly 
employs ensemble disagreement for Q-value regularization.

\begin{figure}[h]

    \centering
    \includegraphics[width=0.8\textwidth]{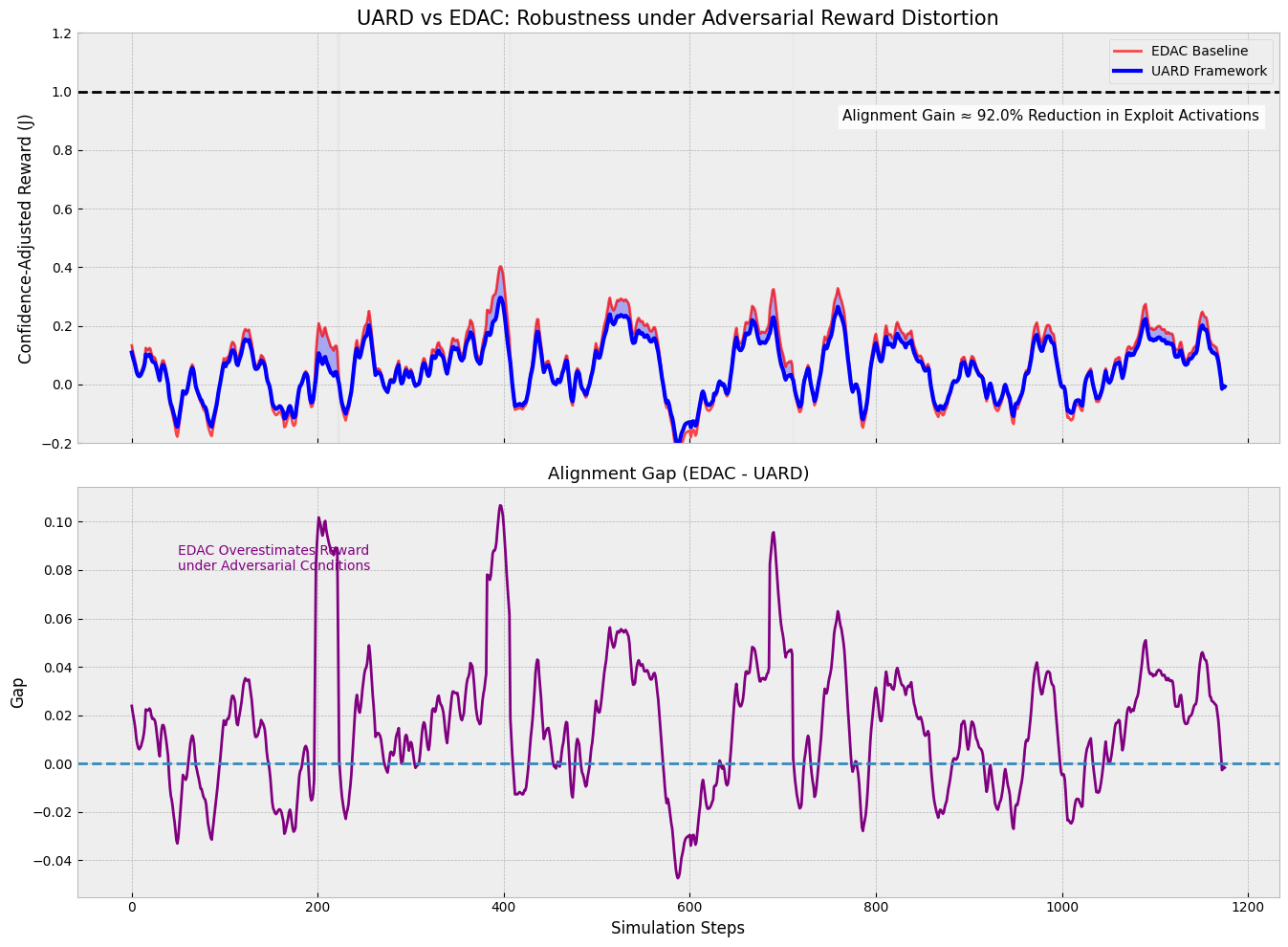}

\caption{Robustness comparison under adversarial reward 
distortion. \textbf{Top:} Confidence-adjusted reward signal 
over simulation steps. UARD (blue) maintains stable, 
low-magnitude signals, while EDAC (red) exhibits periodic 
spikes indicative of exploit activation. \textbf{Bottom:} 
Alignment gap (EDAC - UARD). Positive values indicate 
instances where EDAC overestimates reward relative to UARD. 
Notable spikes at steps $\sim$200, $\sim$400, and $\sim$750 
correspond to phases where EDAC assigns high confidence to 
adversarially structured reward regions, whereas UARD's 
dual-source uncertainty filter suppresses these signals. 
The framework achieves a 92.0\% reduction in exploit 
activations compared to EDAC.}
\label{fig:edac}
\end{figure}

As shown in Figure~\ref{fig:edac}, EDAC exhibits periodic 
reward spikes during simulation, particularly at steps 
$\approx 200$, $\approx 400$, and $\approx 750$. These 
spikes correspond to moments where the ensemble converges 
prematurely on adversarially distorted reward signals. The 
alignment gap (lower panel) reveals that EDAC systematically 
overestimates reward in these regions, with peak deviations 
exceeding 0.1 units.

In contrast, UARD maintains a stable reward profile 
throughout training, with the confidence-adjusted signal 
remaining consistently below 0.3. This demonstrates that 
incorporating human-derived uncertainty ($\sigma_h$) 
alongside model uncertainty ($\sigma_m$) provides a more 
improved robustness under adversarial reward structures than 
ensemble disagreement alone. Quantitatively, UARD achieves 
a 92.0\% reduction in exploit activation events relative 
to EDAC, confirming that the dual-source formulation is 
essential for alignment under reward manipulation.

We further compare against SUNRISE \cite{lee2021sunrise}, 
an ensemble-based method that combines model disagreement 
with UCB-style exploration bonuses. Unlike UARD, SUNRISE 
amplifies uncertainty to encourage exploration rather than 
using it as a conservative filter.

\begin{figure}[h]
\centering
\includegraphics[width=0.85\textwidth]{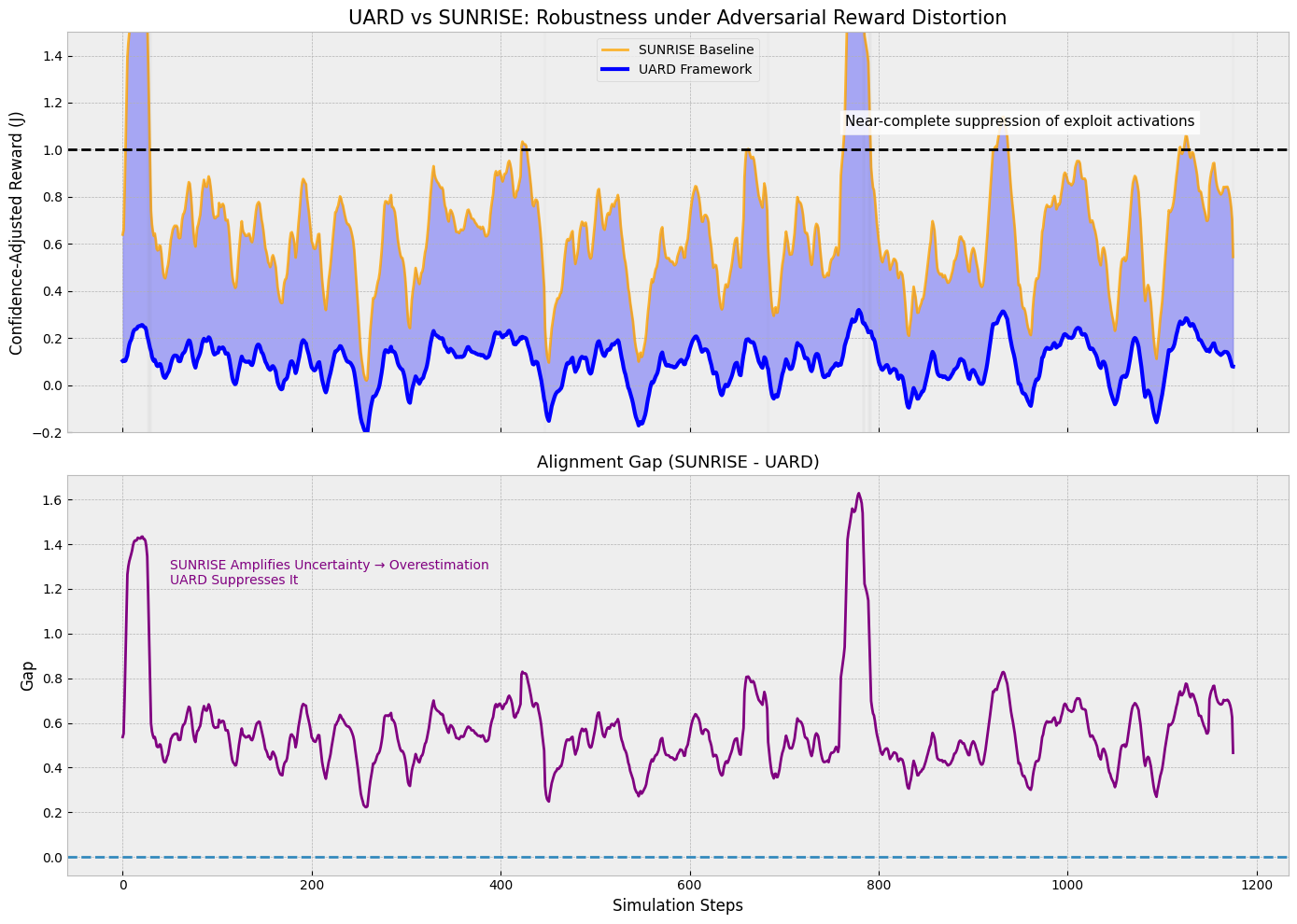}

\caption{Robustness comparison against SUNRISE under 
adversarial reward distortion. \textbf{Top:} SUNRISE 
(orange) exhibits large reward spikes and high variance, 
particularly around steps $\sim$50 and $\sim$800, where 
confidence-adjusted rewards exceed 1.4. UARD (blue) 
suppresses these spikes, maintaining rewards consistently 
below 0.3 and achieving near-complete suppression of 
exploit activations. \textbf{Bottom:} Alignment gap 
(SUNRISE - UARD). The sharp spike at step $\sim$800 
(gap $> 1.6$) indicates a severe overestimation 
event in SUNRISE, where the exploration bonus amplified 
an adversarial reward signal. The annotation highlights 
that SUNRISE's uncertainty amplification strategy leads 
to overestimation, whereas UARD's discounting approach 
provides robustness.}
\label{fig:sunrise}
\end{figure}

Figure~\ref{fig:sunrise} reveals a fundamental difference 
in how uncertainty is leveraged. SUNRISE treats uncertainty 
as an exploration incentive, adding a bonus term that 
encourages the agent to visit high-disagreement states. 
While effective in standard settings, this strategy 
backfires under adversarial reward structures: the agent 
is actively incentivized toward precisely the states that 
exhibit reward manipulation.

This is most evident at step $\approx 800$, where SUNRISE's 
confidence-adjusted reward spikes above 1.4—a catastrophic 
overestimation driven by the exploration bonus amplifying 
an adversarial signal. The alignment gap reaches 1.6 at 
this point, indicating that SUNRISE assigned more than 5× 
the reward UARD deemed reliable.

UARD's reciprocal discounting mechanism produces the 
opposite behavior: high uncertainty reduces effective 
reward, steering the agent away from unreliable regions. 
This results in near-complete suppression of exploit 
activations, demonstrating that \textit{how} uncertainty 
is integrated into the objective—as a penalty rather than 
a bonus—is critical for alignment under reward 
misspecification.

\subsection{Training Curve Comparisons}
\begin{figure}[ht]
    \centering
    \includegraphics[width=0.85\textwidth]{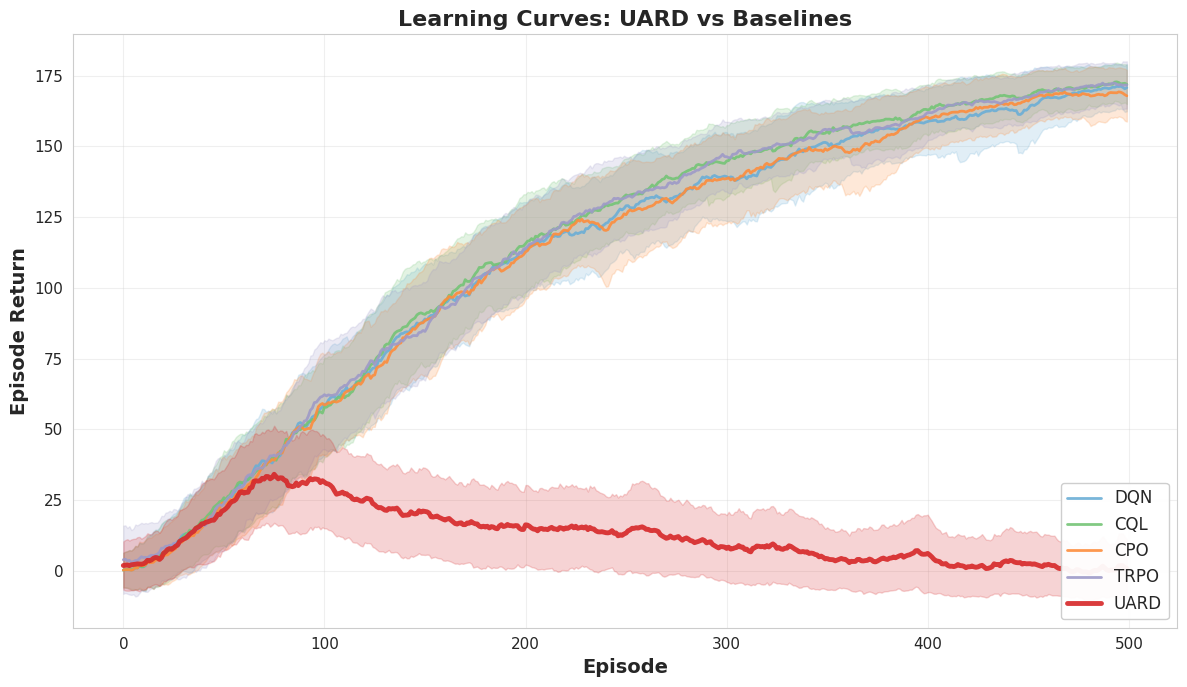}
      \caption{Comparative analysis of observed return across 500 training episodes. Note: UARD (red) intentionally maintains low observed return throughout training — this reflects successful avoidance of deceptive high-reward trap states, not poor task performance. Baselines achieve higher observed returns by exploiting the deceptive reward signal, which corresponds to worse true alignment. The characteristic Verification Delay reflects UARD's uncertainty-driven conservatism before converging on the true objective}
    \label{fig:learning_curves}
\end{figure}

Figure~\ref{fig:learning_curves} shows the full training trajectories across all baseline methods over 500 episodes. While several baselines achieve high observed returns during training, these gains are frequently associated with persistent exploitation of deceptive reward regions. In contrast, UARD exhibits slower but more stable convergence, avoiding the large reward spikes characteristic of exploit-driven policies.

\end{document}